\newtheorem{theorem}{Theorem}
\newtheorem{proposition}{Proposition}
\newtheorem{lemma}[theorem]{Lemma}
\theoremstyle{definition}
\newtheorem{definition}{Definition}
\newtheorem{example}{Example}
\newcommand{\dd}{\mathrm{d}}
\DeclareMathOperator*{\amper}{\scalebox{1.5}{\&}}
\newcolumntype{P}[1]{>{\centering\arraybackslash}p{#1}}
\newcolumntype{M}[1]{>{\centering\arraybackslash}m{#1}}
\author{%
  Feynman Liang\\
  Department of Statistics\\
  University of California, Berkeley\\
  \texttt{feynman.liang@gmail.com} \\
  \And
  Liam Hodgkinson \\
  School of Mathematics and Statistics \\
  University of Melbourne, Australia \\
  \texttt{lhodgkinson@unimelb.edu.au} \\
  \AND
  Michael W. Mahoney \\
  ICSI, LBNL, and Department of Statistics \\
  University of California, Berkeley \\
  \texttt{mmahoney@stat.berkeley.edu}
}
\begin{document}

\title{A Heavy-Tailed Algebra for Probabilistic Programming}

\maketitle

\begin{abstract}
Despite the successes of probabilistic models based on passing noise through neural networks, recent work has identified that such methods often fail to capture tail behavior accurately---unless the tails of the base distribution are appropriately calibrated. 
To overcome this deficiency, we propose a systematic approach for analyzing the tails of random variables, and we illustrate how this approach can be used during the static analysis (before drawing samples) pass of a probabilistic programming language compiler. 
To characterize how the tails change under various operations, we develop an algebra which acts on a three-parameter family of tail asymptotics and which is based on the generalized Gamma distribution. 
Our algebraic operations are closed under addition and multiplication; they are capable of distinguishing sub-Gaussians with differing scales; and they handle ratios sufficiently well to reproduce the tails of most important statistical distributions directly from their definitions. 
Our empirical results confirm that inference algorithms that leverage our heavy-tailed algebra attain superior performance across a number of density modeling and variational inference tasks.
\end{abstract}

\section{Introduction}

Within the context of modern probabilistic programming languages (PPLs), recent developments in functional programming \citep{tolpin2016design}, programming languages \citep{bernstein2019static}, and deep variational inference (VI) \citep{bingham2019pyro} combine to facilitate efficient probabilistic modelling and inference. 
Despite the broadening appeal of probabilistic programming, however, common pitfalls such as mismatched distribution supports \citep{lee2019towards} and non-integrable expectations \citep{wang2018variational,vehtari2015pareto,yao2018yes} remain uncomfortably commonplace and remarkably challenging to address. In particular, heavy-tailed distributions arise in a wide range of statistical applications and are known to present substantial technical challenges~\citep{NairWiermanZwart,yao2018yes,wang2018variational}.
Recent innovations aiming to improve PPLs have automated verification of
distribution constraints \citep{lee2019towards}, tamed noisy gradient estimates \citep{eslami2016attend} as well as unruly density ratios
\citep{vehtari2015pareto,wang2018variational}, and approximated high-dimensional distributions with non-trivial bulks \citep{papamakarios2021normalizing}. To address the issue of heavy-tailed targets, approaches which initialize with non-Gaussian tails have been proposed \citep{jaini2020tails,ftvi}. However, these methods typically require the use of optimization and/or sampling strategies to estimate the tails of the target distribution. Such strategies are often unstable, or they fail to allow for a sufficiently wide array of possible tail behaviours. 

Motivated by this, we introduce the first procedure for static analysis of a probabilistic program that automates analysis of target distributions' tails. In addition, we show how tail metadata obtained from this procedure can be leveraged by PPL compilers to generate inference algorithms which mitigate a number of pathologies.
For example, importance sampling estimators can exhibit infinite variance if the tail of the approximating density is lighter than the target; most prominent black-box VI methods are incapable of changing their tail behaviour from an initial proposal distribution \citep{jaini2020tails,ftvi}; and Monte-Carlo Markov Chain (MCMC) algorithms may also lose ergodicity when the tail of the target density falls outside of a particular family \citep{roberts1996exponential}. 
All of these issues could be avoided if the tail of the target is known before runtime.

To classify tail asymptotics, 
we propose a three-parameter family of distributions which is closed under most typical operations.
This family is based on the generalized Gamma distribution (\Cref{eq:GenGammaDensity}), and it interpolates between established asymptotics on sub-Gaussian random variables \citep{ledoux2001concentration} and regularly varying random variables \citep{mikosch}.
Algebraic operations on random variables can then be lifted to computations on the tail parameters.
This results in a \emph{heavy-tailed algebra} that we designate as the \emph{generalized Gamma algebra (GGA)}.
Through analyzing operations like $X + Y$, $X^2$, and $X / Y$ at the level of densities (e.g., additive convolution $p_X \oplus p_Y$), the tail parameters of a target density can be estimated from the parameters of any input distributions using \Cref{tab:gga_operations}.

Operationalizing our GGA, we propose a tail inferential static analysis strategy analogous to traditional type inference. %
GGA tail metadata can be used to diagnose and address tail-related problems in downstream tasks, such as
employing Riemannian-manifold methods \citep{girolami2011riemann} to sample heavy tails
or preemptively detect unbounded expectations.
In this paper, we consider density estimation and VI, where we use the GGA-computed tail of the target density to calibrate our density approximation.
When composed with a learnable Lipschitz pushforward map (\Cref{ssec:repr_dist}),
the resulting combination is a flexible density approximator with tails provably calibrated to match those of the target.

\begin{figure}
\begin{tblr}{colspec={Q[.4\textwidth]Q[.2\textwidth]Q[.3\textwidth]}}
\centering \bf Analyze Target & \centering \bf Calibrate Tails & \centering \bf Refine Bulk
\end{tblr}
\vspace{-1.2cm}

{
\SetTblrInner{rowsep=-8pt}
\begin{tblr}{colspec={c Q[.15cm] c Q[1.1cm] c Q[.15cm] c},columns={m,m,m,m,m,m,m}}
\raisebox{-.5\height}{\includegraphics[width=0.25\textwidth]{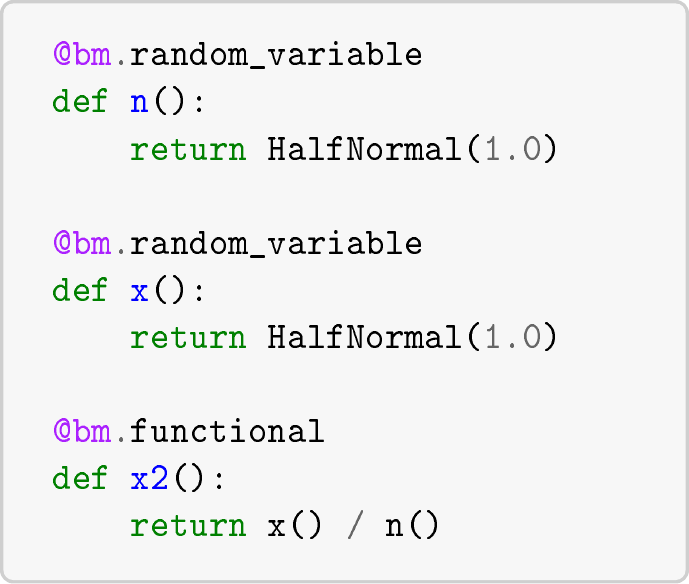} \hspace{-.3cm}}
&\scalebox{1}{$\boldsymbol{\longrightarrow}$} & \raisebox{-.4\height}{\hspace{-.3cm} \includegraphics[width=0.2\textwidth]{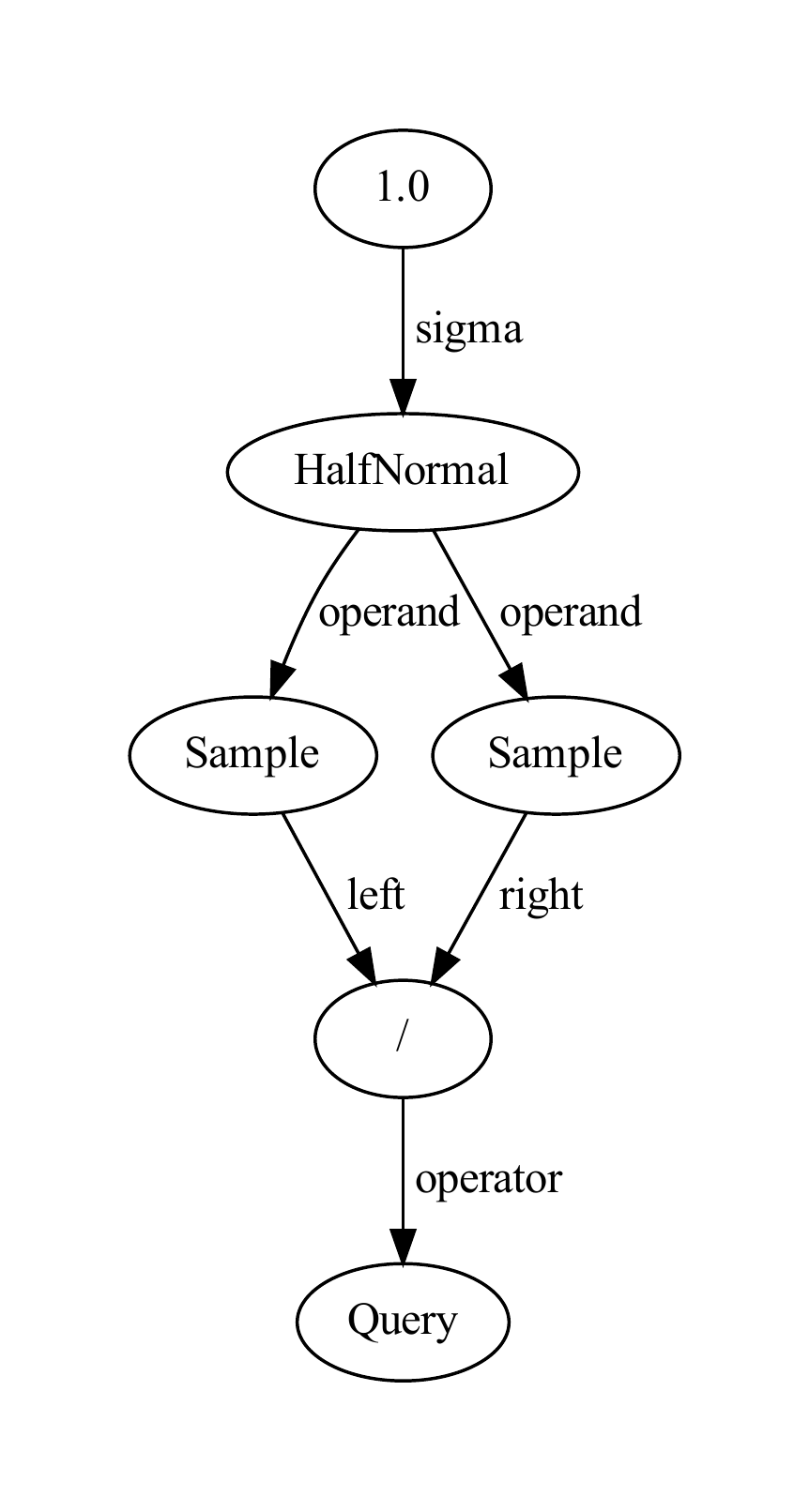}\hspace{-.2cm}} & \hspace{-.5cm}\scalebox{1.7}{$\xrightarrow{\,\,\textit{\textbf{GGA}}\,\,}$}  &  \stackanchor{\hspace{-0.2cm}\includegraphics[width=0.15\textwidth]{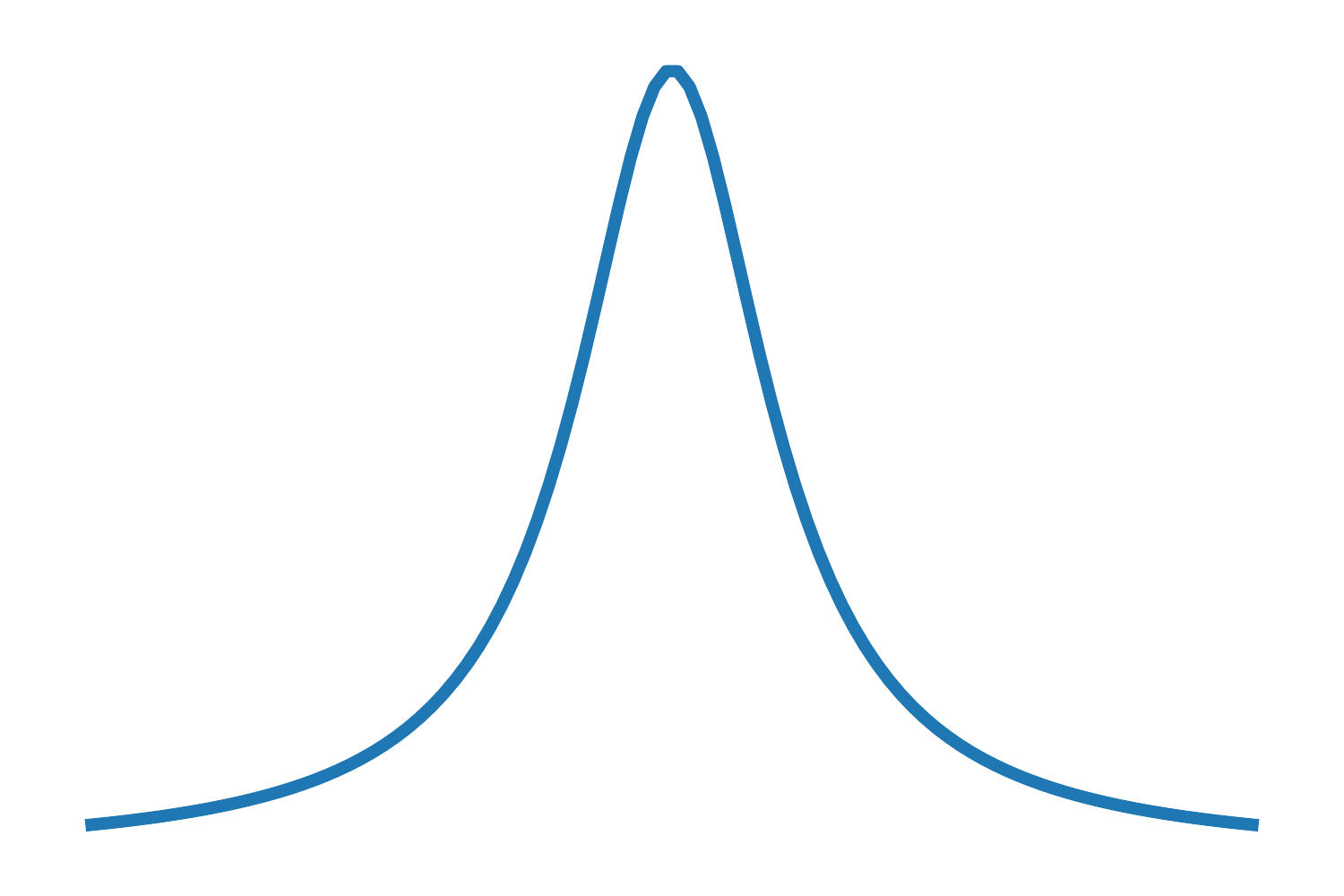}}{\hspace{-0.2cm}\scalebox{1.4}{$\mathcal{R}_2$}}\hspace{-.3cm} & \scalebox{1}{$\boldsymbol{\longrightarrow}$ } & \raisebox{-.5\height}{\includegraphics[width=0.2\textwidth]{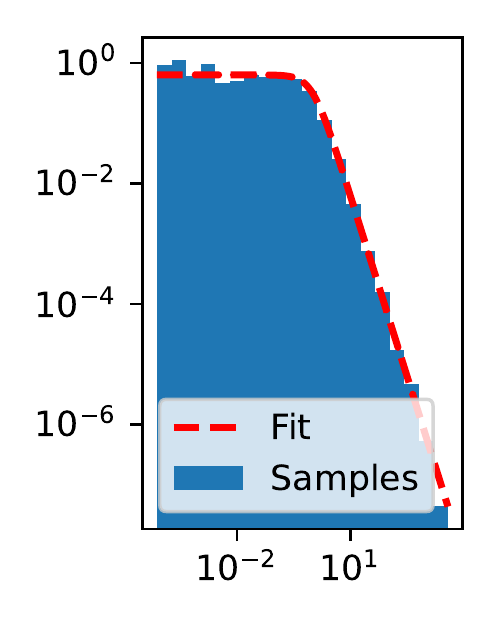}} \\
(1) & & (2) & & (3) & & (4)
\end{tblr}
}
	\caption{Our heavy-tailed algebra ensures that the tails of density estimators and variational approximations are calibrated to those of the target distribution.
    Here, a generative model expressed in a PPL (1) is analyzed using the GGA \emph{without drawing any samples} (2) to compute the tail parameters of the target. 
		A representative distribution with calibrated tails is chosen for the initial approximation (3), and a learnable tail-invariant
		Lipschitz pushforward (see bottom of \Cref{tab:gga_operations}, and \Cref{lem:lipschitz}) is optimized (4) to correct the bulk approximation. 
 }
\end{figure}

\textbf{Contributions.}
Here are our main contributions.

\vspace{-.2cm}
\begin{itemize}[leftmargin=*]
	\item We propose the generalized Gamma algebra (GGA) as an example of a heavy-tailed algebra for probability distributions. 
	This extends prior work on classifying tail asymptotics, and it includes both sub-Gaussian / sub-exponentials \citep{ledoux2001concentration} as well as power-law / Pareto-based tail indices \citep{clauset2009power}. Composing operations outlined in \Cref{tab:gga_operations}, one can compute the GGA tail class for downstream random variables of interest.
	\item We implement the GGA in the static analysis phase of a PPL compiler.
	      This unlocks the ability to leverage GGA metadata in order to
	      better tailor MCMC and VI algorithms produced by a PPL.
	\item We demonstrate that density estimators which combine our GGA tails with neural networks
 (autoregressive normalizing flows \cite{papamakarios2021normalizing} and neural spline flows \cite{durkan2019neural})
	    achieve calibrated tails without sacrificing good bulk approximation.
\end{itemize}

\section{The Generalized Gamma Algebra}\label{sec:gga}

To start, we formulate our heavy-tailed algebra of random variables that is closed under most standard elementary operations (addition, multiplication, powers). The central class of random variables under consideration are those with tails of the form in Definition \ref{def:gg_tail}.

\begin{definition}\label{def:gg_tail}
A random variable $X$ is said to have a \emph{generalized Gamma tail} if the Lebesgue density of $|X|$ satisfies
\begin{equation}
\label{eq:GenGammaTails}
p_{|X|}(x) \sim c x^\nu e^{-\sigma x^\rho}, \qquad \text{as } x \to \infty,
\end{equation}
for some $c > 0$, $\nu \in \mathbb{R}$, $\sigma > 0$ and $\rho \in \mathbb{R}$. Denote the set of all such random variables by $\mathcal{G}$.
\end{definition}
Consider the following equivalence relation on $\mathcal{G}$: $X \equiv Y$ if and only if $0 < p_{|X|}(x) / p_{|Y|}(x) < +\infty$ for all sufficiently large $x$. 
The resulting equivalence classes can be represented by their corresponding parameters $\nu, \sigma, \rho$.
Hence, we denote the class of random variables $X$ satisfying \Cref{eq:GenGammaTails} by $(\nu,\sigma,\rho)$. 
In the special case where $\rho = 0$, for a fixed $\nu < -1$, each class $(\nu,\sigma,0)$ for $\sigma > 0$ is equivalent, and is denoted by $\mathcal{R}_{|\nu|}$, representing \emph{regularly varying} tails. 
Our algebra operates on these equivalence classes of $\mathcal{G}$, characterizing the change in tail behaviour under various operations. To incorporate tails which lie outside of $\mathcal{G}$, we let $\mathcal{R}_1$ incorporate \emph{super-heavy tails}, which denote random variables with tails heavier than any random variable in $\mathcal{G}$. All operations remain consistent with this notation. Likewise, we let $\mathcal{L}$ denote \emph{super-light tails}, which are treated in our algebra as a class where $\rho = +\infty$ (effectively constants).

\Cref{eq:GenGammaTails} and the name of the algebra are derived from the generalized Gamma distribution.

\begin{table*}[h!]
    \centering
    \bgroup
\SetTblrInner{rowsep=7pt}
\begin{tblr}{colspec={||Q[2.35cm]|Q[c,1.8cm]X||},columns={m,t,m},hlines}
\hline
\centering \textbf{Ordering} & 
$(\nu_1,\sigma_1,\rho_1) \leq $ $\quad(\nu_2,\sigma_2,\rho_2) $ & $\displaystyle{\iff \limsup_{x\to\infty} \frac{x^{\nu_1} e^{-\sigma_1 x^{\rho_1}}}{x^{\nu_2} e^{-\sigma_2 x^{\rho_2}}} < +\infty.}$ \\ 
\centering \textbf{Addition} & $(\nu_{1},\sigma_{1},\rho_{1})$ $\oplus$ \newline $(\nu_{2},\sigma_{2},\rho_{2})$ & \scalebox{0.95}{$\equiv\begin{cases}
\max\{(\nu_{1},\sigma_{1},\rho_{1}),(\nu_{2},\sigma_{2},\rho_{2})\} & \text{if }\rho_{1}\neq\rho_{2}\text{ or }\rho_{1},\rho_{2}<1\\[2pt]
(\nu_{1}+\nu_{2}+1,\min\{\sigma_{1},\sigma_{2}\},1) & \text{if }\rho_{1}=\rho_{2}=1\\[5pt]
\displaystyle{\bigg(\nu_{1}+\nu_{2}+\frac{2-\rho}{2},\big(\sigma_{1}^{-\tfrac{1}{\rho-1}}+\sigma_{2}^{-\tfrac{1}{\rho-1}}\big)^{1-\rho},\rho\bigg)} & \text{if }\rho=\rho_{1}=\rho_{2}>1.
\end{cases}$}
\\ 
\centering \textbf{Powers} & $(\nu,\sigma,\rho)^\beta$& $\displaystyle{\equiv \left(\frac{\nu+1}{\beta} - 1,\sigma,\frac{\rho}{\beta}\right)}$ for $\beta > 0$ \\
\centering \textbf{Reciprocal*}${}^\dagger$ & $(\nu,\sigma,\rho)^{-1}$ & $\equiv \begin{cases} (-\nu-2,\sigma,-\rho) & \text{ if } (\nu + 1)/\rho > 0 \text{ and } \rho \neq 0 \\
\mathcal{R}_2 & \text{ otherwise}
\end{cases}$ \\ 
\centering \textbf{Scalar \linebreak Multiplication} & $c(\nu,\sigma,\rho)$ & $\equiv (\nu,\sigma |c|^{-\rho},\rho)$\\
\centering \textbf{Multiplication*} & \vspace{-2cm} \newline $(\nu_{1},\sigma_{1},\rho_{1})\otimes(\nu_{2},\sigma_{2},\rho_{2})$&$  \equiv\begin{cases}
\displaystyle{\left(\frac{1}{\mu}\left(\frac{\nu_{1}}{|\rho_{1}|}+\frac{\nu_{2}}{|\rho_{2}|}+\frac{1}{2}\right),\sigma,-\frac{1}{\mu}\right)} & \text{ if }\rho_{1},\rho_{2}<0\\[15pt]
\displaystyle{\left(\frac{1}{\mu}\left(\frac{\nu_{1}}{\rho_{1}}+\frac{\nu_{2}}{\rho_{2}}-\frac{1}{2}\right),\sigma,\frac{1}{\mu}\right)} & \text{ if }\rho_{1},\rho_{2}>0\\[10pt]
\mathcal{R}_{|\nu_1|} & \mbox{ if }\rho_{1}\leq0,\rho_{2}>0 \\[5pt]
\mathcal{R}_{\min\{|\nu_1|,|\nu_2|\}} & \mbox{ if }\rho_{1}=0,\rho_{2}=0
\end{cases}$ \newline \vspace{.3cm} \newline  
$\displaystyle{\begin{aligned}
\text{where }\mu&=\frac{1}{|\rho_{1}|}+\frac{1}{|\rho_{2}|}=\frac{|\rho_{1}|+|\rho_{2}|}{|\rho_{1}\rho_{2}|}\\
\sigma&=\mu(\sigma_{1}|\rho_{1}|)^{\tfrac{1}{\mu|\rho_{1}|}}(\sigma_{2}|\rho_{2}|)^{\tfrac{1}{\mu|\rho_{2}|}}.
\end{aligned}}$\\ 
\centering \textbf{Product of Densities*} & $(\nu_1,\sigma_1,\rho_1)\,\&$\newline $(\nu_2,\sigma_2,\rho_2) $&$
\equiv \begin{cases}
(\nu_{1}+\nu_{2},\sigma_{1},\rho_{1}) & \text{ if }\rho_{1}<\rho_{2}\\
(\nu_{1}+\nu_{2},\sigma_{1}+\sigma_{2},\rho) & \text{ if }\rho=\rho_{1}=\rho_{2}\\
(\nu_{1}+\nu_{2},\sigma_{2},\rho_{2}) & \text{ otherwise.}
\end{cases}$ \\ 
\centering \textbf{Exponentials*}${}^\dagger$ & $\exp(\nu,\sigma,\rho)$ & $\equiv \begin{cases} \mathcal{R}_{\sigma+1} & \text{if } \rho \geq 1 \\ \mathcal{R}_1 & \text{ otherwise.} \end{cases}$ \\
\centering \textbf{Logarithms*}${}^\dagger$ & $\log(\nu,\sigma,\rho)$ & $\equiv \begin{cases} (0, |\nu|-1, 1) & \text{if } \nu < -1 \\ \mathcal{L} & \text{ otherwise.} \end{cases}$ \\
\centering \textbf{Functions ($L$-Lipschitz)} & 
$f(X_1,\dots,X_n) $&$\equiv L \max\{X_1,\dots,X_n\}$ \\ 
\hline
    \end{tblr}%
    \vspace{.25cm}
    \caption{\label{tab:gga_operations}\textbf{\textit{The Generalized Gamma Algebra.}} Operations on random variables (e.g., $X_1 + X_2$) are 
    viewed as actions on density functions
    (e.g., convolution $(\nu_1, \sigma_1, \rho_1)\oplus (\nu_2, \sigma_2, \rho_2)$) and the tail parameters of the result are analyzed and reported. In this table, * denotes novel results, and $\dagger$ denotes that additional assumptions are required.}
\egroup
\end{table*}

\begin{definition}
Let $\nu \in \mathbb{R}$, $\sigma > 0$, and $\rho \in \mathbb{R} \backslash \{0\}$ be such that $(\nu+1)/ \rho > 0$.
A non-negative random variable $X$ is \emph{generalized Gamma distributed} with parameters $\nu,\sigma,\rho$ if it has Lebesgue density
\begin{equation}    
\label{eq:GenGammaDensity}
p_{\nu,\sigma,\rho}(x) = c_{\nu,\sigma,\rho} x^\nu e^{-\sigma x^\rho},\qquad x > 0,
\end{equation}
where $c_{\nu,\sigma,\rho} = \rho \sigma^{(\nu+1)/\rho} / \Gamma((\nu+1)/\rho)$ is the normalizing constant. %
\end{definition}
The importance of the generalized Gamma form arises due to a combination of two factors:
\begin{enumerate}[label={(\roman*)},leftmargin=*]
    \item 
    The majority of interesting continuous univariate distributions with infinite support satisfy \Cref{eq:GenGammaTails}, including
    Gaussians ($\nu=0$, $\rho=2$),
    gamma/exponential/chi-squared ($\nu > -1$, $\rho=1$), Weibull/Frechet ($\rho = \nu + 1$), and
    Student $T$/Cauchy/Pareto ($\mathcal{R}_\nu$).
    A notable exception is the log-normal distribution (see \Cref{eg:log-normal} in \Cref{sec:addtl_eg}).
    \item 
    The set $\mathcal{G}$ is known to be closed under additive convolution, positive powers, and Lipschitz functions. 
    We prove it is closed under multiplicative convolution as well. 
    This covers the majority of elementary operations on independent random variables. Reciprocals, exponentials and logarithms comprise the only exceptions;
    however, we will introduce a few ``tricks'' to handle these cases as well. 
\end{enumerate}
The full list of operations in GGA is compiled in \Cref{tab:gga_operations} and is described in detail in \Cref{sec:gga_operations}.
GGA classes for common probability distributions are provided in \Cref{sec:univariate_classes}.
All operations in the GGA can be proven to exhibit identical behaviour with their corresponding operations on random variables, with the sole exception of reciprocals (marked by $\dagger$ in \Cref{tab:gga_operations}), where additional assumptions are required. The asymptotics for operations marked with an asterisk are novel to this work. For further details, see \Cref{sec:gga_operations}.

\paragraph{Posterior distributions.}

A primary application of PPLs is to perform Bayesian inference.
To cover this use case, it is necessary to prescribe a procedure to deal with \emph{posterior distributions}. 
Consider a setup where a collection of random variables $X_1,\dots,X_n$ are dependent on corresponding latent random elements $Z_1,\dots,Z_n$ as well as a parameter $\theta$ through functions $f_i$ by $X_i = f_i(\theta, Z_i)$. 
For simplicity, we assume that each $f_i = f_{i,k}\circ f_{i,k-1}\circ\cdots \circ f_{i,1}$ where each $f_{ij}$ is an elementary operation in Table \ref{tab:gga_operations}. 
To estimate the tail behaviour of $\theta$ conditioned on $X$, we propose an elementary approach involving inverses. 
For each operation $f_{ij}$, if $f_{ij}$ is a power, reciprocal, or multiplication operation, let $R_{ij}$ be given according to the following:

\centering \begin{tabular}{rll}
\textbf{Powers:}&$f(x) = x^\beta$,&$R \equiv (1-\beta,1,0)$\\
\textbf{Reciprocals:}&$f(x)=x^{-1}$,&$R \equiv (2,1,0)$\\
\textbf{Multiplication:}&$f(x,y)=xy$,&$R\equiv (1,1,0)$
\end{tabular}
and otherwise, let $R_{ij} \equiv 1$.

\justifying Letting $f^{-1}_i(x,z)$ denote the inverse of $f_i$ in the first argument, we show in \Cref{sec:gga_operations} that
\vspace{-.15cm}
\[
\theta \vert \boldsymbol{X} = \boldsymbol{x} \equiv 
\bigg(\amper_{i=1}^n f^{-1}_i(\boldsymbol{x}, Z_i)\bigg)\&\bigg(\amper_{i,j=1}^{n,k} R_{ij}\bigg)\, \&\, \pi,
\]
where $\pi$ denotes the prior for $\theta$ and $X \& Y$ denotes the product of densities operation. Since the inverse of a composition of operations is a composition of inverses, the tail of $f_i^{-1}(\boldsymbol{x},Z_i)$ can be determined by backpropagating through the computation graph for $X_i$ and sequentially applying inverse operations. Consequently, the tail behaviour of the posterior distribution for one parameter can be obtained using a single backward pass. Posterior distributions for multiple parameters involve repeating this procedure one parameter at a time, with other parameters fixed.

\section{Implementation}\label{sec:impl}

\subsection{Compile-time static analysis}

\begin{minipage}[t]{.5\textwidth}
To illustrate an implementation of GGA for static analysis, we sketch the operation of the PPL compiler at a high-level.
A probabilistic program is first inspected using Python's built-in \texttt{ast} module
and transformed to static single assignment (SSA) form \citep{rosen1988global}.
Next, standard compiler optimizations (e.g., dead code elimination, constant propagation)
are applied and an execution of the optimized program is traced \citep{wingate2011lightweight,bingham2019pyro}
and accumulated in a directed acyclic graph representation. A breadth-first type checking pass, as seen in Algorithm~\ref{alg:bfs_typecheck}, completes in linear time, and GGA results may be applied to implement \texttt{computeGGA()} using the following steps:
\end{minipage}
\hspace{.03\textwidth} 
\begin{minipage}[t]{.45\textwidth}
\vspace{-.75cm}
\SetKwComment{Comment}{/* }{ */}
\begin{algorithm}[H]
	\caption{%
	GGA tails static analysis pass}\label{alg:bfs_typecheck}
	\KwData{Abstract syntax tree for a PPL program}
	\KwResult{GGA parameter estimates for all random variables}
	frontier $\gets$ [rv : Parents(rv) = $\emptyset$]\;
	tails $\gets \{\}$\;
	\While{\text{frontier} $\neq \emptyset$}{
		next $\gets$ frontier.popLeft()\;
		tails[next] $\gets$ computeGGA(next.op, next.parent)\;\hspace{-.1cm}
		frontier $\gets$ frontier + next.children()\;
	}
	\Return{tails}
\end{algorithm}
\end{minipage}
\begin{itemize}[leftmargin=*]
	\item If a node has no parents, then it is an atomic distribution and its tail parameters are known (\Cref{tab:dist_list});
 	\item Otherwise, the node is an operation taking its potentially stochastic inputs (parents) to its output; in which case, consult \Cref{tab:gga_operations} for the output GGA tails.
\end{itemize}

\subsection{Representative distributions}
\label{ssec:repr_dist}

For each $(\nu,\sigma,\rho)$, we make a carefully defined choice of $p$ on $\mathbb{R}$ such that if $X \sim p$, then $X \equiv (\nu,\sigma,\rho)$. This way, any random variable $f(X)$, where $f$ is $1$-Lipschitz, will exhibit the correct tail, and so approximations of this form may be used for VI or density estimation. Let $X \equiv (\nu,\sigma,\rho)$ and $0 < \epsilon \ll 1$ denote a small parameter such that tails $e^{-x^\epsilon}$ are deemed to be ``very heavy'' (we chose $\epsilon = 0.1$). 
Our candidate distributions are as follows.

\begin{itemize}
    \item[($\rho \leq 0$)] If $\rho \leq -1$, then $p_X(x) \sim c x^{-|\nu|}$. One such density is the \emph{Student $t$ distribution}, in this case, with $|\nu|-1$ degrees of freedom if $\nu < -1$ (generate $X \sim \text{StudentT}(|\nu|-1))$.
    \item[($\rho > \epsilon$)] For moderately sized $\rho > 0$, the symmetrization of the generalized Gamma density (\ref{eq:GenGammaDensity}). 
    \item[($\rho \leq \epsilon$)] If $X \equiv (\nu,\sigma,\rho)$ where $\rho$ is small, then $X$ will exhibit much heavier tails, and the generalized Gamma distribution in Case 1 will become challenging to sample from. In these cases, we expect that the tail of $X$ should be well represented by a power law. 
    The generalized Gamma density (\Cref{eq:GenGammaDensity}) satisfies $\mathbb{E}X^r = \sigma^{-r/\rho} \Gamma(\frac{\nu+1+r}{\rho})/\Gamma(\frac{\nu+1}{\rho})$ for $r > 0$. Let $\alpha > 0$ be such that $\mathbb{E}X^\alpha = 2$. By Markov's inequality, the tail of $X$ satisfies $\mathbb{P}(X>x)\leq 2 x^{-\alpha}$. Therefore, we can represent tails of this form by the Student $t$ distribution with $\alpha+1$ degrees of freedom (generate $X \sim \text{StudentT}(\alpha)$).
\end{itemize}

\subsection{Bulk correction by Lipschitz mapping}

While a representative distribution will exhibit the desired tails,
the target distribution's bulk may be very different from a generalized Gamma and thus result in poor distributional approximation. 
To address this, we propose splicing together the tails from a generalized Gamma with a flexible density approximation for the bulk.
Many combinations are possible.
In this work, we rely on the Lipschitz operation in the GGA (\Cref{lem:lipschitz}) and post-compose neural
spline flows \citep{durkan2019neural} (which are identity functions outside of a bounded interval) after
properly initialized generalized Gamma distributions.
Optimizing the parameters of the flow results in good bulk approximation while simultaneously
preserving the tail correctness guarantees attained by the GGA.

\begin{figure}[h]
    \centering
    
    \begin{tikzpicture}
  \node at (0,0) {\includegraphics[width=0.9\textwidth]{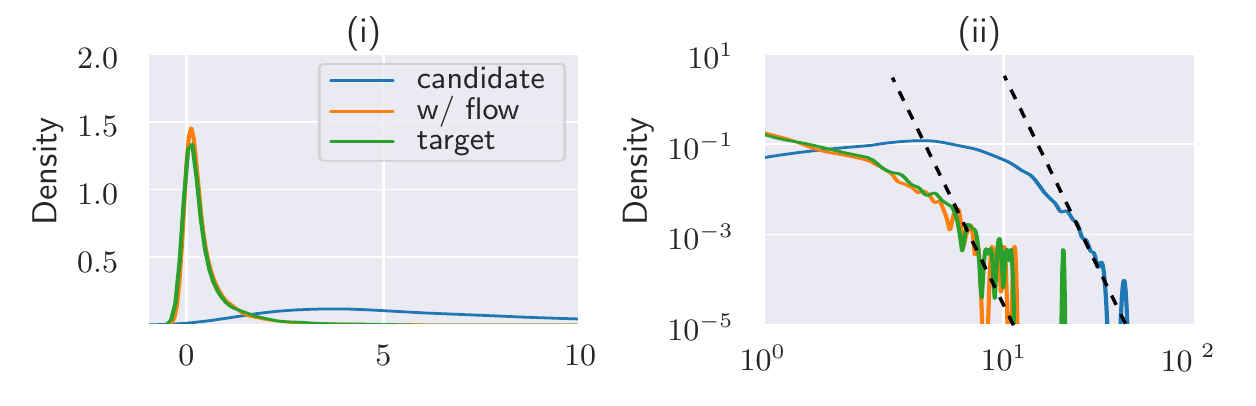}};
  
  \draw[dashed, line width=1pt] (3.25,1) -- (4.5,-1.25);
  \draw[dashed, line width=1pt] (4.6,1) -- (5.85,-1.25);
    \end{tikzpicture}

\vspace{-.25cm}
    \caption{
    The \textcolor{blue}{\textbf{candidate}} distribution chosen by the GGA calibrates tails to the \textcolor{OliveGreen}{\textbf{target}}, but with incorrect bulk. 
    A Lipschitz normalizing \textcolor{orange}{\textbf{flow}} corrects the bulk (i) without changing the tail behaviour, as seen by the parallel tail asymptotics (black dashed lines) in (ii).
}
    \label{fig:power_normal}
\end{figure}

\begin{example}
	Let $A \in \mathbb{R}^{k \times k}$, $x,y\in\mathbb{R}^k$, with $x_i,y_i,A_{ij} \overset{\text{iid}}{\sim} \mathcal{N}(-1,1)$.
	The distribution of $x^\top A y = \sum_{i,j} x_i A_{ij} y_j$ is a convolution of normal-powers \citep{gupta2008analyzing} and lacks a
	closed form expression.
	Using the GGA (\Cref{tab:gga_operations}), one can compute
	its tail parameters to be $(\frac{k}{2}-1,\frac{3}{2}, \frac{2}{3})$. The candidate given by the GGA representative distribution (\Cref{ssec:repr_dist}) is a gamma distribution with correct tail behaviour, but it is a poor approximation otherwise. A learnable Lipschitz bijection is optimized to correct the bulk approximation (\Cref{fig:power_normal}(i)). From the Lipschitz property, the slope of the tail asymptotics in log-log scale remains the same before and after applying the flow correction (\Cref{fig:power_normal}(ii)): the tails are \emph{guaranteed} to remain calibrated.
\end{example}

\begin{example}
    Consider $\sum_{i=1}^4 X_i^2$ where $X_i \sim \text{StudentT}(i)$.
    While we are not aware of a closed-form expression for the density,
    this example is within the scope of our GGA. Empirical results illustrate that our method (\Cref{fig:squared_studentt_qq}(i)) accurately models \emph{both} the bulk and the tail, while Gaussian-based Lipschitz flows (\Cref{fig:squared_studentt_qq}(ii)) inappropriately impose tails which decay too rapidly.
\end{example}

\begin{figure}
    \centering
    \includegraphics[width=0.45\textwidth]{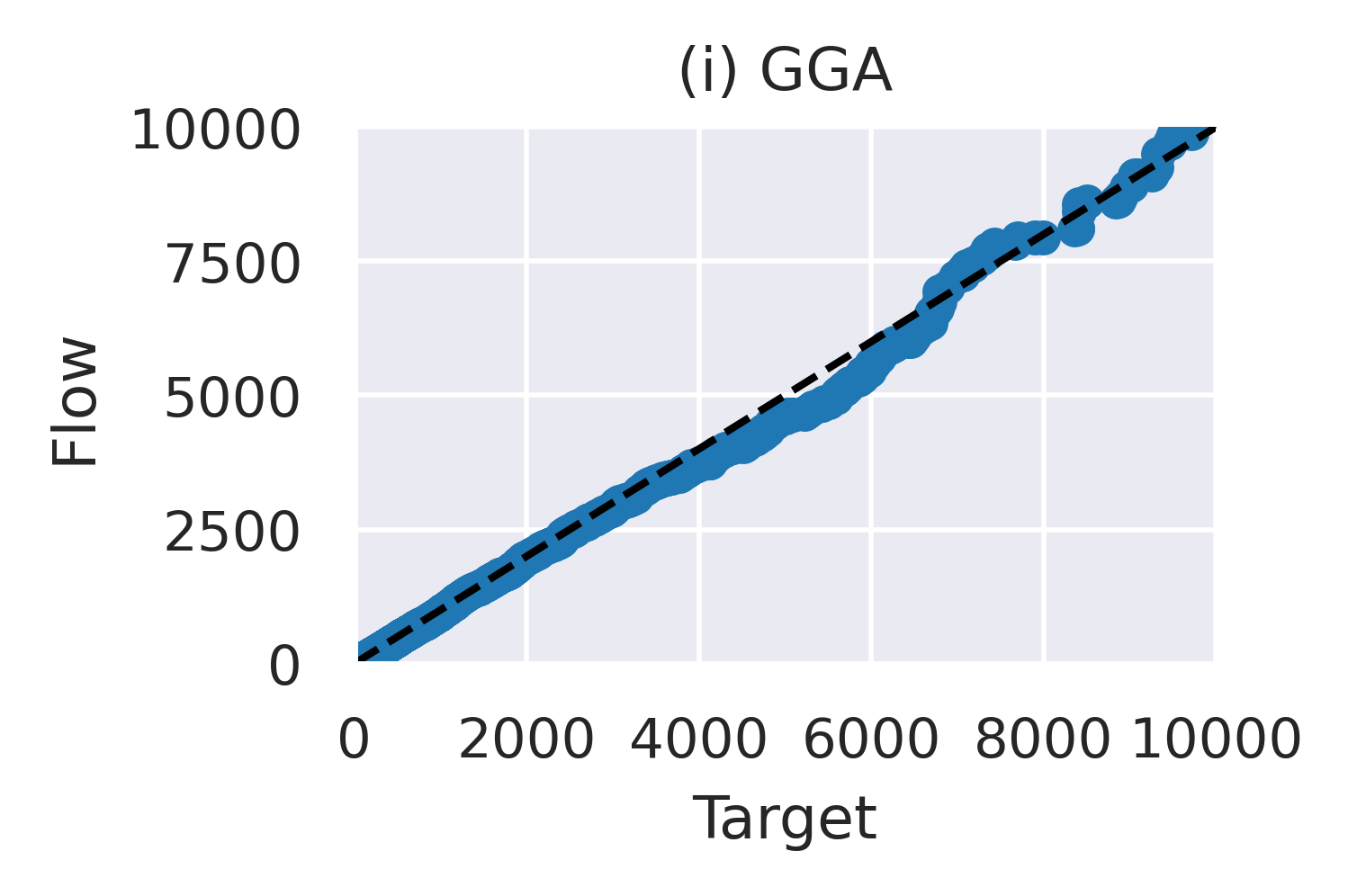}
    \includegraphics[width=0.45\textwidth]{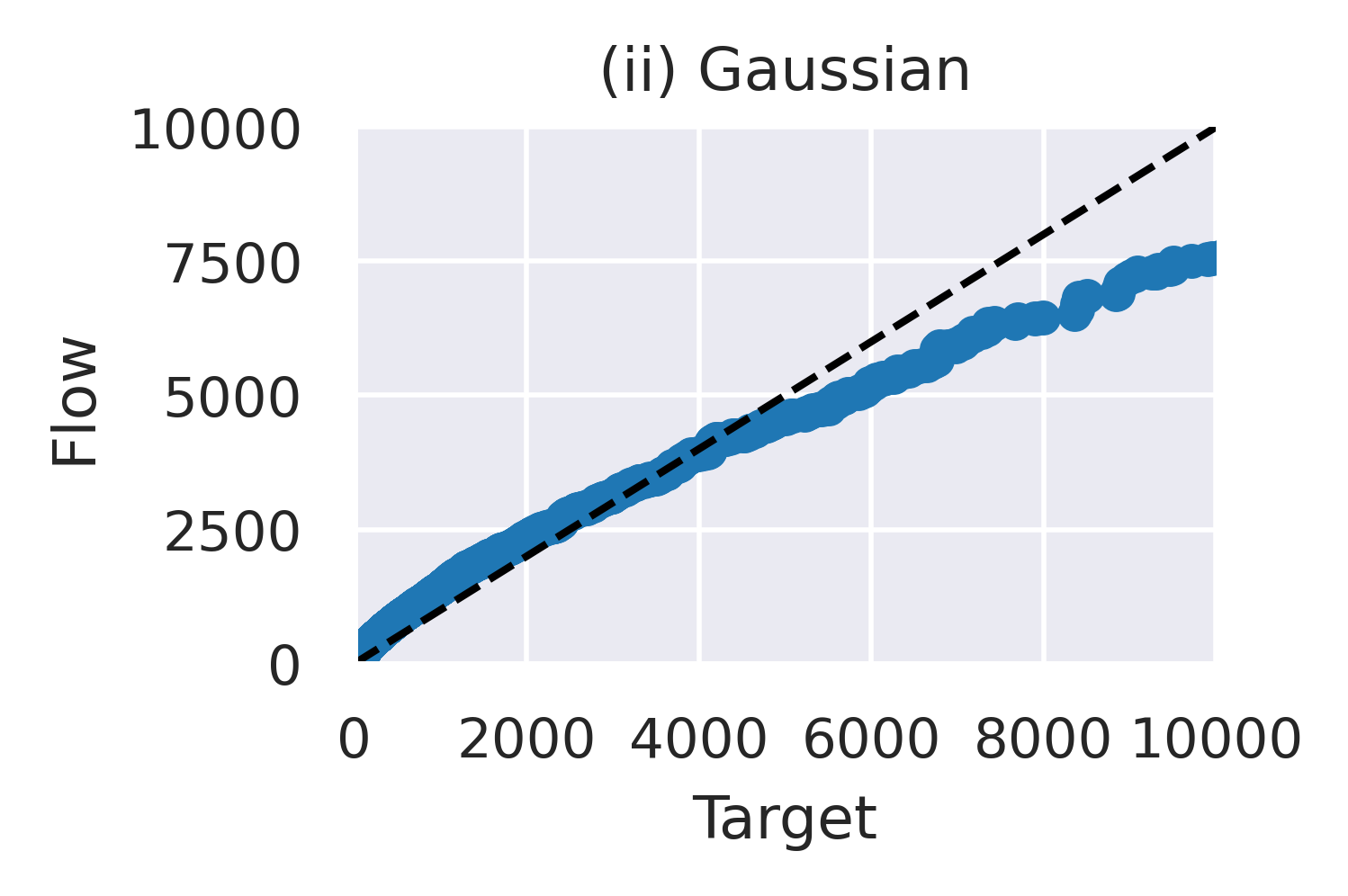}
    \caption{Q-Q plots of density approximations of a heavy-tailed target ($\sum_{i=1}^4 X_i^2$ where $X_i \sim \text{StudentT}(i)$) initialized by our GGA candidate (i) and the Gaussian distribution (ii).
    While the expressive modeling capability of flows enables good approximation of the distribution bulk, Lipschitz transformations of Gaussians inevitably impose miscalibrated squared exponential tails which are not sufficiently heavy as evidenced in (ii).
    \vspace{-.25cm}}
    \label{fig:squared_studentt_qq}
\end{figure}

\section{Theoretical Examples}\label{sec:addtl_eg}

To verify that our GGA yields accurate predictions of tail behaviour, we work
out some explicit GGA computations on several standard distributions using operations in \Cref{tab:gga_operations}.
By doing so, we recover some common probability identities.

~

\begin{example}[Chi-squared random variables]
	Let $X_1,\dots,X_k$ be $k$ independent standard normal random variables. The variable $Z = \sum_{i=1}^k X_i^2$ is \emph{chi-squared distributed} with $k$ degrees of freedom. Using the GGA, we can accurately determine the tail behaviour of this random variable directly from its construction. Recall that each $X_i \equiv (0,1/2,2)$, and by the power operation, $X_i^2 \equiv (-1/2,1/2,1)$. Applying the addition operation $k$ times reveals that $Z \equiv (k/2-1,1/2,1)$ and implies that the density of $Z$ is asymptotically $c x^{k/2-1} e^{-x / 2}$ as $x \to \infty$.
 In fact, it is known that the density of $Z$ is exactly $p_Z(x) = c_k x^{k/2-1} e^{-x/2}$, where $c_k = 2^{-k/2} / \Gamma(k/2)$.
\end{example}
~

\begin{example}[Products of random variables]
	To demonstrate the multiplication operation in our algebra, we consider the product of two exponential, Gaussian, and reciprocal Gaussian random variables. Traditionally, asymptotics for the distribution of the product of two random variables would be found analytically. For example, consider the following Lemma \ref{lem:Products}.
	\begin{lemma}
	\label{lem:Products}
		Let $X_1,X_2 \sim \mbox{Exp}(\lambda)$ and $Z_1,Z_2 \sim \mathcal{N}(0,1)$ be independent. As $x \to \infty$, the densities of $X_1 X_2$, $Z_1 Z_2$ and $Z = 1/Z_1 \cdot 1/Z_2$ satisfy 
		\[
			p_{X_1 X_2}(x) \sim \frac{\lambda^{3/2}\sqrt{\pi}}{x^{1/4}} e^{-2\lambda \sqrt{x}},\quad p_{Z_1 Z_2}(x) \sim \frac{1}{\sqrt{2\pi x}} e^{-x},\quad
			p_{Z}(x) \sim \frac{1}{\sqrt{2\pi}|z|^{3/2}}e^{-1/|z|}.
		\]
	\end{lemma}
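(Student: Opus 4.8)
The plan is to compute each product density exactly as a multiplicative convolution, recognize the resulting integral as a modified Bessel function of the second kind \(K_0\), and then read off the stated asymptotics from the large-argument expansion \(K_0(\zeta)\sim\sqrt{\pi/(2\zeta)}\,e^{-\zeta}\) as \(\zeta\to\infty\). For independent positive variables \(X,Y\) with densities \(p_X,p_Y\), the density of the product is \(p_{XY}(w)=\int_0^\infty p_X(x)\,p_Y(w/x)\,x^{-1}\,\dd x\); for the symmetric normal case I would first use the symmetry \(z\mapsto -z\) to fold the integral onto \((0,\infty)\), with the corresponding Jacobian factor \(1/|z|\). The single analytic fact doing all the work is the integral representation \(\int_0^\infty t^{-1}e^{-(at+b/t)}\,\dd t = 2K_0(2\sqrt{ab})\), which matches the \(x+w/x\) shape appearing in every case.

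First, for \(X_1,X_2\sim\mathrm{Exp}(\lambda)\), substituting the densities gives \(p_{X_1X_2}(w)=\lambda^2\int_0^\infty x^{-1}e^{-\lambda(x+w/x)}\,\dd x = 2\lambda^2 K_0(2\lambda\sqrt w)\); the large-argument expansion at \(\zeta=2\lambda\sqrt w\) then yields \(\lambda^{3/2}\sqrt\pi\,w^{-1/4}e^{-2\lambda\sqrt w}\). Second, for \(Z_1,Z_2\sim\mathcal N(0,1)\), folding and the substitution \(u=z^2\) give \(p_{Z_1Z_2}(w)=\frac1\pi K_0(|w|)\), and the same expansion at \(\zeta=|w|\) produces \(\tfrac{1}{\sqrt{2\pi w}}e^{-w}\). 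Third, I would obtain the reciprocal density by the change of variables \(z\mapsto 1/z\) applied to \(Z_1Z_2\): since \(Z=1/(Z_1Z_2)\), one has \(p_Z(z)=z^{-2}\,p_{Z_1Z_2}(1/z)=\frac{1}{\pi z^2}K_0(1/|z|)\).

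Here I would flag a subtlety: unlike the first two claims, the stated asymptotic \(\frac{1}{\sqrt{2\pi}}|z|^{-3/2}e^{-1/|z|}\) is really a statement about \(z\to 0\), not \(z\to\infty\) --- this is exactly where the exponential tail of \(Z_1Z_2\) is transported by the reciprocal into a concentration at the origin (for large \(z\) the density is instead power-law of index \(2\), consistent with \(1/Z_1\cdot 1/Z_2\equiv\mathcal R_2\)). Applying the large-argument \(K_0\)-expansion at \(\zeta=1/|z|\to\infty\) and simplifying gives the claim. This regime identification --- recognizing that the \(e^{-1/|z|}\) structure lives near zero --- together with careful bookkeeping of the multiplicative constants through the Bessel identity, is the main place where an error is likely to creep in.

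Finally, I would note an alternative, special-function-free route that is closer to the general mechanism behind the multiplication and reciprocal rows of \Cref{tab:gga_operations}: evaluate each convolution integral \(\int x^{-1}e^{-\phi_w(x)}\,\dd x\) by Laplace's method, locating the saddle of the exponent \(\phi_w\) and integrating the Gaussian fluctuation around it. For the exponential case \(\phi_w(x)=\lambda(x+w/x)\) has its minimum at \(x^\ast=\sqrt w\) with \(\phi_w(x^\ast)=2\lambda\sqrt w\), which reproduces the exponential rate directly, while the prefactor \(w^{-1/4}\) emerges from the second-derivative (curvature) term. This avoids invoking \(K_0\), but requires justifying the uniformity of the Laplace approximation, which is the analytic cost one trades for not using the closed form.
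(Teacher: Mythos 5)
Your proof is correct and follows essentially the same route as the paper's: write each product density as a multiplicative convolution, recognize the integral as $2K_0(2\sqrt{ab})$ via the Bessel representation, and apply the large-argument expansion $K_0(\zeta)\sim\sqrt{\pi/(2\zeta)}\,e^{-\zeta}$ (the paper derives the third density by convolving two reciprocal-normal densities directly rather than by the change of variables $z\mapsto 1/z$, but this is immaterial). Your flag on the third asymptotic is also well taken --- the expansion there is applied at $\zeta=1/|z|\to\infty$, i.e.\ as $z\to0$, a regime clarification that the paper's own proof and the lemma's ``as $x\to\infty$'' phrasing leave implicit.
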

 With ease, our algebra correctly determines that $X_1 X_2 \equiv (-\frac14,2\lambda,\frac12)$, $Z_1 Z_2 \equiv (-\frac12, 1, 1)$ and $Z \equiv (-\frac32,1,-1)$. 
 To demonstrate how one would ascertain these asymptotics manually, see the proof of Lemma \ref{lem:Products} in Appendix 
 \ref{sxn:proof_of_a_lemma}.
\end{example}
~

\begin{example}[Reciprocal distributions]
	Perhaps the most significant challenge with our tail algebra is correctly identifying the tail behaviour of reciprocal distributions. Here, we test the efficacy of our formulation with known reciprocal distributions.
	\begin{itemize}
		\item \emph{Reciprocal normal:} $X \sim \mathcal{N}(0,1) \equiv (0,1/2,2)$, and $X^{-1} \equiv (-2,1/2,-2)$.
		\item \emph{Inverse exponential:} $X \sim \text{Exp}(\lambda) \equiv (0,\lambda,1)$, and $X^{-1} \equiv (-2,\lambda,-1)$.
		\item \emph{Inverse $t$-distribution:} $X \equiv \mathcal{R}_\nu$, and $X^{-1} \equiv \mathcal{R}_2$.
		\item \emph{Inverse Cauchy:} $X \equiv \mathcal{R}_2$, it is known $X^{-1}$
		      has the same distribution, and our theory predicts $X^{-1} \equiv \mathcal{R}_2$.
	\end{itemize}
\end{example}
~

\begin{example}[Cauchy distribution]
	A simple special case of the Student $T$ distribution is the Cauchy distribution, which arises as the ratio of two standard normal random variables. For $X\sim\mathcal{N}(0,1)$, $X \equiv (0,1/2,2)$ and $X^{-1} \equiv (-2,1/2,-2)$. %
	Hence, the multiplication operation correctly predicts that the ratio of two standard normal random variables is in $\mathcal{R}_2$.
\end{example}
~

\begin{example}[Student $T$ distribution]
	Let $X$ be a standard normal random variable, and $V$ a chi-squared random variable with $\nu$ degrees of freedom. The random variable $T = X / \sqrt{V / \nu}$ is \emph{$t$-distributed} with $\nu$ degrees of freedom. Since $V \equiv (\nu/2-1,1/2,1)$, multiplying by the constant $1/\nu$ reveals $V / \nu \equiv (\nu/2-1,1/(2\nu),1)$. Applying the square root operation, $\sqrt{V / \nu} \equiv (\nu-1,1/(2\nu),2)$. To compute the division operation, we first take the reciprocal to find $(V/\nu)^{-1/2} \equiv (-\nu-1,1/(2\nu),-2)$. Finally, since $\rho = -2 < 1$ for this random variable, the multiplication operation with $X \equiv (0,1/2,2)$ yields $T \equiv \mathcal{R}_{\nu+1}$.  Thus, the density of $T$ is asymptotically $c x^{-\nu-1}$ as $x \to \infty$.
	In fact, it is known that the density of $T$ satisfies $p_T(x) = c_\nu (1 + x^2 / \nu)^{-(\nu+1)/2}$ where $c_\nu = \Gamma(\frac{\nu+1}{2})/\Gamma(\frac{\nu}{2}) (\nu\pi)^{-1/2}$, which exhibits the predicted tail behaviour.
\end{example}
~

\begin{example}[Log-normal distribution]\label{eg:log-normal}
	Although the log-normal distribution does not lie in $\mathcal{G}$, the existence of log-normal tails arising from the multiplicative central limit theorem is suggested by our algebra. Let $X_1,X_2,\dots$ be independent standard normal random variables, and let $Z_k = X_1\cdots X_{2^k}$ for each $k=1,2,\dots$. By the multiplicative central limit theorem, letting $\tau = \exp(\mathbb{E}\log |X_i|) \approx 1.13$, we have that
 $$\left(\frac{X_{1}\cdots X_{n}}{\tau}\right)^{1/\sqrt{n}} \overset{\mathcal{D}}{\longrightarrow} Z\qquad\text{as }n \to \infty,$$ where $Z$ is a log-normal random variable with density
	\[
		p_Z(x) = \frac{1}{x\sqrt{2\pi}} \exp(-\tfrac12 (\log x)^2).
	\]
	Therefore, the same is true for $V_k = (Z_k / \tau)^{2^{-k/2}}$. Using our algebra, we will attempt to reproduce the tail of this density. Letting $\tilde{Z}_k = X_{2^k} \cdots X_{2^{k+1}}$, we see that $Z_{k+1} = Z_k \tilde{Z}_k$, and $Z_k,\tilde{Z}_k$ are iid. Let $Z_k \equiv (\nu_k, \sigma_k, \rho_k)$, by induction using the multiplication operation, we find that 
 \begin{align*}
 \nu_{k+1} &=\frac{1}{\mu}\left(\frac{2\nu_{k}}{\rho_{k}}-\frac{1}{2}\right)=\nu_{k}-\frac{\rho_{k}}{4}\\
 \sigma_{k+1}&=\mu\left(\sigma_{k}\rho_{k}\right)^{\frac{2}{\mu\rho_{k}}}=\frac{2}{\rho_{k}}\left(\sigma_{k}\rho_{k}\right)=2\sigma_{k}\\
 \rho_{k+1}&=\frac{1}{\mu}=\frac{\rho_{k}}{2}.
 \end{align*}
 Since $\rho_0 = 2$, $\sigma_0 = 1/2$, and $\nu_0 = 0$, we find that $\rho_k = 2^{1-k}$ and $\sigma_k = 2^{k-1}$. Furthermore, $\nu_{k+1} = \nu_k - 2^{-k-1}$ and so $\nu_k = -1 + 2^{-k}$. %
 Therefore 
\begin{align*}
    Z_k &\equiv (-1+2^{-k},2^{k-1},2^{1-k}),\qquad\text{and,} \\
    V_k &\equiv (-1+2^{-k/2},2^{k-1}\tau^{-2^{1-k}},2^{1-k/2}),
\end{align*}
and letting $\epsilon_k = 2^{-k/2}$, the tail behaviour of the density of $V_k$ satisfies
\begin{align*}
p_k(x) & \sim c_k x^{-1+\epsilon_{k}}\exp\left(-\frac{\epsilon_{k}^{-2}}{2\tau^{-2\epsilon_{k}^{2}}}x^{2\epsilon_k}\right) \\& \sim c_k x^{-1+\epsilon_{k}}\exp\left(-\frac{1}{2\tau^{-2\epsilon_{k}^{2}}}\left(\frac{x^{\epsilon_{k}}-1}{\epsilon_{k}}\right)^{2}\right) \approx c_k x^{-1}\exp\left(-\frac{1}{2}(\log x)^2\right),
\end{align*}
as $x \to \infty$, where the approximation improves as $k$ gets larger. The quality of this approximation is demonstrated in \Cref{fig:lognormal}.
\end{example}

\begin{figure*}[h]
	\centering
	\includegraphics[width=0.48\textwidth]{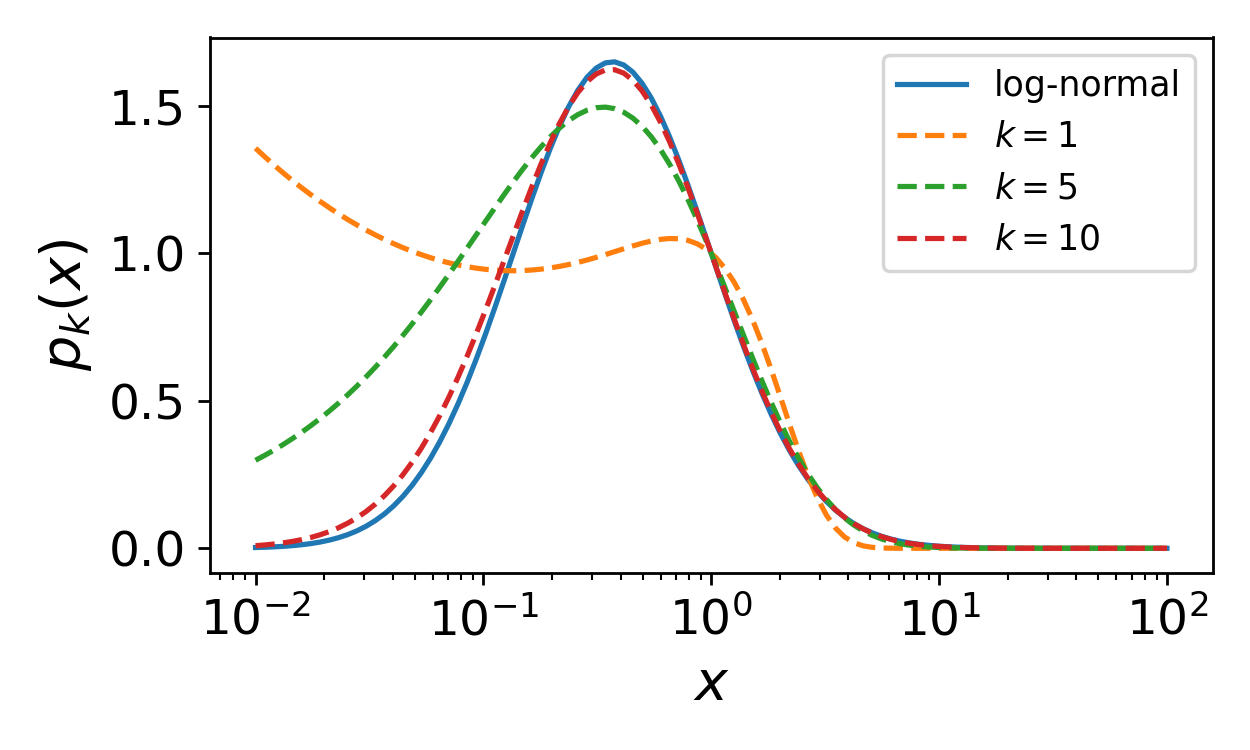}
	\includegraphics[width=0.48\textwidth]{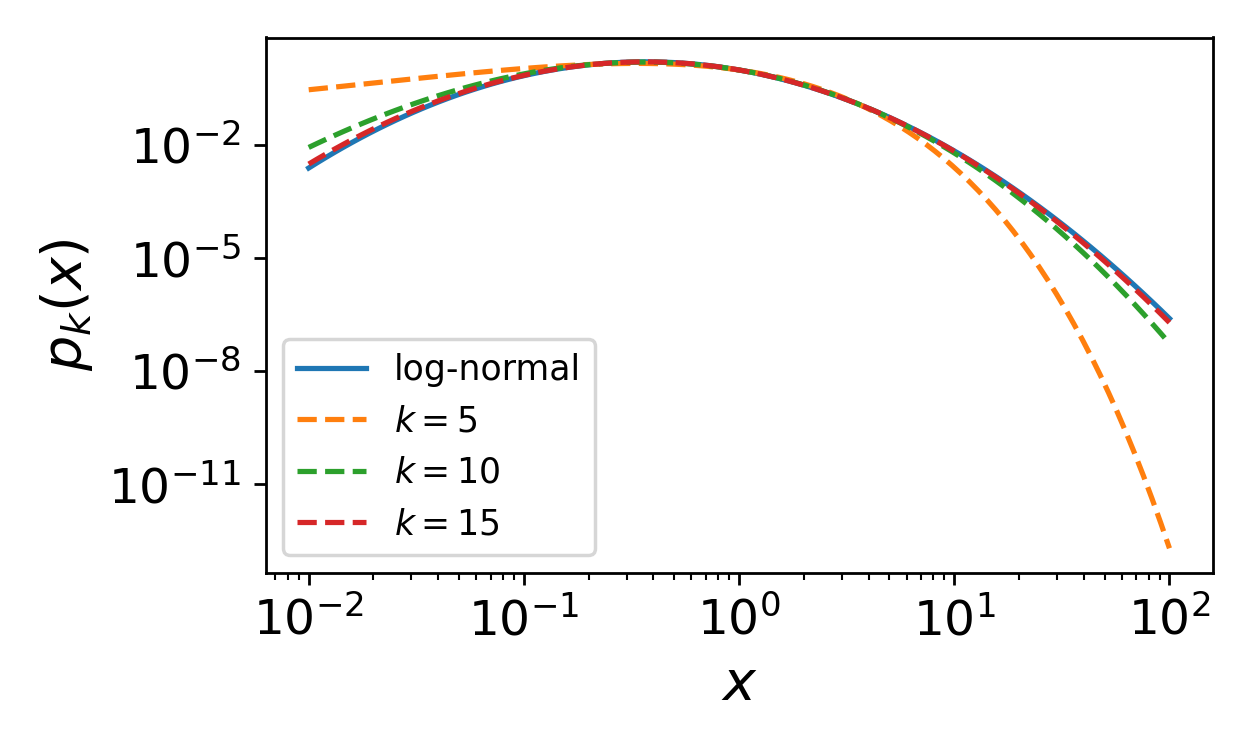}
	\caption{Estimation of the log-normal density (blue) by the representative density chosen by the GGA applied to $V_k$ for $k=1,5,10$ (orange, green, red, respectively), as presented on a log-linear scale (left) and a log-log scale (right). }
	\label{fig:lognormal}
\end{figure*}

\section{Empirical Results}
\label{sec:experiments}

We now demonstrate that GGA-based density estimation yields improvements in tail estimation across several metrics. Our experiments consider normalizing flows initialized from (i) the parametric family defined in \Cref{ssec:repr_dist} against (ii) a normal distribution (status quo).
To further contrast the individual effect of using a GGA base distribution over standard normals against more expressive pushforward maps \citep{durkan2019neural}, we also report ablation results where normalizing flows are replaced by affine transforms, as originally proposed in \citep{kucukelbir2017automatic}. 
All experiments are repeated for 100 trials, trained to convergence using the Adam optimizer with manually tuned learning rate. Additional details are available in \Cref{sec:experiment_details}.
All target distributions in this section are expressed as generative PPL programs:
Cauchy using a reciprocal normal; 
Chi2 (chi-squared) using a sum of squared normals; 
IG (Inverse Gamma) using a reciprocal exponential; 
normal using a sum of normals; and 
StudentT using a normal and Cauchy ratio. 
Doing so tasks the static analyzer to infer the target's tails and makes the analysis non-trivial.

Our results in the following tables 
share a consistent narrative: 
\emph{a GGA base distribution rarely hurts, and it can significantly help with heavy tailed targets}.
Importantly, standard evaluation metrics such as negative cross-entropy, ELBO, or importance-weighted autoencoder bounds \citep{burda2015importance} do \emph{not} evaluate the quality of tail approximations. Instead, we consider diagnostics which do evaluate the quality of tail approximations: namely, an estimated tail exponent $\hat{\alpha}$, and the Pareto $\hat{k}$ diagnostic
\citep{yao2018yes}.
Except for when targets are truly light tailed ($\alpha=\infty$ in Chi2 and normal),
GGA-based approximations are the only ones to reproduce appropriate GPD tail index $\hat\alpha$ in density estimation and achieve a passing $\hat{k}$ below $0.2$ in VI.
Less surprising is  that adding a flow improved approximation metrics, as we expect the additional representation flexibility to be beneficial.

\paragraph{Density Estimation.} 
Given samples $\{x_i\}_{i=1}^N$ from a target density $p$, we minimize a Monte-Carlo estimate of the cross entropy $H(p,q) = -E_p[\log q(X)] \approx -\frac{1}{N} \sum_{i=1}^N \log q(x_i)$.
The results are shown in \Cref{tab:de} and \Cref{tab:de_ll} along with power-law tail index estimates $\hat\alpha$ \citep{clauset2009power}.
Closeness between the target Pareto tail index $\alpha$ \citep{clauset2009power} and its estimate $\hat\alpha$ in $q(x)$ suggest calibrated tails. 
Overall, we see that normal (resp., Cauchy) based flows fails to capture
heavy (resp., light) tails, while GGA-based
flows yield good tail approximations (lower NLL, $\hat\alpha$ closer to target) across all cases. 

\vspace{-.25cm}
\begin{table*}[h]
	\centering
	\caption{Mean and standard errors (100 trials) of 
	tail parameters $\hat\alpha$ (smaller for heavier
	tails) for various density estimators and targets.
	}
	\label{tab:de}
	
\scalebox{0.9}{
\begin{tabular}{lllll}
\toprule

Target & $\alpha$  & Cauchy ($\alpha=2$) Flow & GGA Flow & Normal ($\alpha=\infty$) Flow  \\
\midrule
Cauchy & $2$ & \bfseries $\boldsymbol{2.1}$ ($\boldsymbol{0.03}$) & \bfseries $\boldsymbol{2.1}$ ($\boldsymbol{0.07}$) & $7.7$ ($2.5$) \\
 IG & $2$ & \bfseries $\boldsymbol{1.9}$ ($\boldsymbol{0.03}$) & \bfseries $\boldsymbol{1.9}$ ($\boldsymbol{0.092}$) & $7.3$ ($1.7$) \\
 StudentT & $3$ & $2.0$ ($0.06$) & \bfseries $\boldsymbol{3.3}$ ($\boldsymbol{0.45}$) & $7.7$ ($2.3$) \\
 Chi2 & $\infty$ & $2.1$ ($0.07$) & \bfseries $\boldsymbol{5.2}$ ($\boldsymbol{1.6}$) & \bfseries $\boldsymbol{6.8}$ ($\boldsymbol{2.4}$) \\
 Normal & $\infty$ & $2.9$ ($0.6$) & \bfseries $\boldsymbol{8.2}$ ($\boldsymbol{4.0}$) & \bfseries $\boldsymbol{8.4}$ ($\boldsymbol{3.5}$) \\
\bottomrule
\end{tabular}
}

\end{table*}
\vspace{-.5cm}
\begin{table*}[h]
	\centering
	\caption{Mean and standard errors of 
	log-likelihoods $E_p \log q(X)$ for various density estimators
	and targets. While larger values imply a better overall approximation (row max bolded), log-likelihood is dominated by bulk approximation so these results show that our method (GGA Flow) does not sacrifice bulk approximation quality.
	}
	\label{tab:de_ll}
	\scalebox{0.9}{
\begin{tabular}{llllll}
\toprule

Target & $\alpha$  & Cauchy ($\alpha=2$) Flow & GGA Flow & Normal ($\alpha=\infty$) Flow  \\
\midrule
Cauchy & $2$ & \bfseries $\boldsymbol{-2.53}$ ($\boldsymbol{0.05}$) & $-3.22$ ($0.06$) & $-1.2 \times 10^3$ ($6 \times 10^3$) \\
 IG & $2$ &  $-3.55$ ($0.08$) & \bfseries $\boldsymbol{-3.26}$ ($\boldsymbol{0.05}$) & $-2.6 \times 10^4$  ($6 \times 10^3$) \\
StudentT & $3$ & \bfseries $\boldsymbol{-2.12}$ ($\boldsymbol{0.03}$) & $-2.75$ ($0.04$) & $-2.92$ ($0.47$) \\
Chi2 & $\infty$ & $-2.30$ ($0.05$) & \bfseries $\boldsymbol{-2.03}$ ($\boldsymbol{0.04}$) & $-2.24$ ($0.04$) \\
Normal & $\infty$ & $-1.53$ ($0.03$) & \bfseries $\boldsymbol{-1.41}$ ($\boldsymbol{0.02}$) & \bfseries $\boldsymbol{-1.42}$ ($\boldsymbol{0.02}$) \\
\bottomrule
\end{tabular}
}
\end{table*}

\paragraph{Variational Inference.} 
For VI, the bulk is corrected through the ELBO optimization objective $E_q \log \frac{p(X)}{q(X)} \approx \frac{1}{N} \sum_{i=1}^N \log \frac{p(x_i)}{q(x_i)},\quad x_i \sim q$.
Since the density $p$ must also be evaluated, for simplicity, experiments in \Cref{tab:vi} use closed-form marginalized densities for targets.
The overall trends also show that GGA yields consistent improvements;
the $\hat{k}$ diagnostic \citep{yao2018yes} indicates VI succeeds
($\hat{k} \leq 0.2$) when a GGA with appropriately matched tails is used and fails ($\hat{k} > 1$) when
Gaussian tails are erroneously imposed.
\vspace{-.5cm}

\begin{table*}[h]
	\centering
	\caption{Pareto $\hat{k}$ diagnostic (\cite{yao2018yes}) to assess goodness of fit for VI (mean across 100 trials, standard deviation in parenthesis)
	on targets of varying tail index (smaller $\alpha =$ heavier tails). A value $> 0.2$ is interpreted as potentially problematic so only values not exceeding it are bolded.}
	\label{tab:vi}
    \scalebox{0.9}{
\begin{tabular}{llllll}
\toprule
Target & $\alpha$ & Normal Affine & Normal Flow & GGA Affine & GGA Flow \\
 \midrule
Cauchy & $\alpha=2$ & $0.62$ ($0.26$) & $0.22$ ($0.059$) & $0.68$ ($0.038$) & \bfseries $\boldsymbol{0.091}$ ($\boldsymbol{0.04}$) \\
IG & $\alpha=2$ & $8.6$ ($1.8$) & $8.2$ ($2.3$) & $2.0$ ($0.4$) & $2.9$ ($0.71$) \\
StudentT & $\alpha=3$ & $1.2$ ($0.16$) & $1.0$ ($0.43$) & $1.5$ ($0.082$) & $1.3$ ($0.097$) \\
Chi2 & $\alpha=\infty$ & $0.57$ ($0.081$) & $0.61$ ($0.067$) & \bfseries $\boldsymbol{0.0093}$ ($\boldsymbol{0.0067}$) & \bfseries $\boldsymbol{0.089}$ ($\boldsymbol{0.044}$) \\
Normal & $\alpha=\infty$ & $0.53$ ($0.17$) & $0.21$ ($0.067$) & $0.4$ ($0.086$) & \bfseries $\boldsymbol{0.2}$ ($\boldsymbol{0.089}$) \\
\bottomrule
\end{tabular}
}
\end{table*}

\paragraph{Bayesian linear regression.} 

As a practical example of VI applied to posterior distributions, we consider the setting of one-dimensional Bayesian linear regression (BLR) with conjugate priors, defined by the likelihood $y \vert X,\beta,\sigma \sim \mathcal{N}(X\beta, \sigma^2)$ with a Gaussian prior $\beta \vert \sigma^2 \sim \mathcal{N}(0,\sigma^2)$ on the coefficients, and an inverse-Gamma prior with parameters $a_0$ and $b_0$ on the residual variance $\sigma^2$. The posterior distribution for $\beta$ conditioned on $\sigma^2$ and $X, y$ is Gaussian. However, conditional on $X, y$, $\sigma^2$ is inverse-Gamma distributed with parameters $a_0+\frac{n}{2}$ and $b_0+\frac12(y^\top y - \mu^\top \Sigma \mu))$, where $\mu = \Sigma^{-1}X^\top X \hat{\beta}$ for $\hat{\beta}$ the least-squares estimator, and $\Sigma = X^\top X + I$. Since $\sigma^2$ is positive, it is typical for PPL implementations to apply an exponential transformation. Hence, a Lipschitz normalising flow starting from a Gaussian initialization will inappropriately approximate the inverse Gamma distributed $p(\sigma^2 \vert X,y)$ with log-normal tails. On the other hand, Lipschitz flows starting from a GGA reference distribution will exhibit the correct tails. We assess this discrepancy in Figure \ref{fig:BLR} under an affine transformation on four subsampled datasets: \texttt{super} (superconductor critical temperature prediction dataset \cite{hamidieh2018data} with $n=256$ and $d = 154$); \texttt{who} (life expectancy data from the World Health Organisation in the year 2013 \cite{who-dataset} with $n=130$, $d = 18$); \texttt{air} (air quality data \cite{de2008field} with $n = 6941$, $d = 11$); and \texttt{blog} (blog feedback prediction dataset \cite{buza2013feedback} with $n = 1024$, $d = 280$). In Figure \ref{fig:BLR}(i), the GGA-based method seems to perfectly fit to the targets, while in Figure \ref{fig:BLR}(ii), the standard Gaussian approach fails to capture the tail behaviour.

\begin{figure}[h]
\centering
\qquad\quad\;(i)\hspace{0.425\textwidth} (ii)

\includegraphics[width=0.45\textwidth]{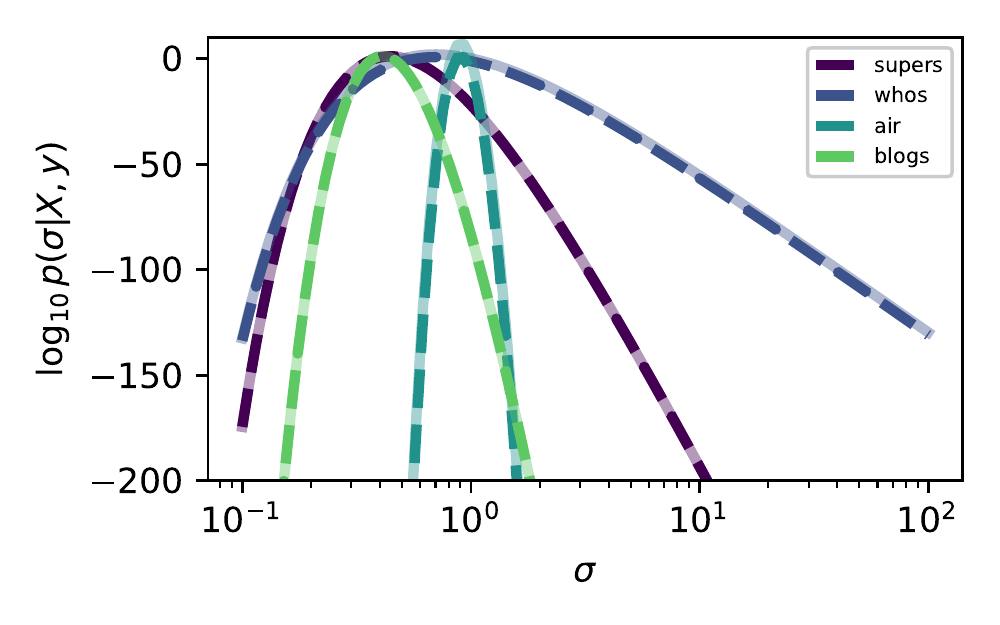}
\includegraphics[width=0.45\textwidth]{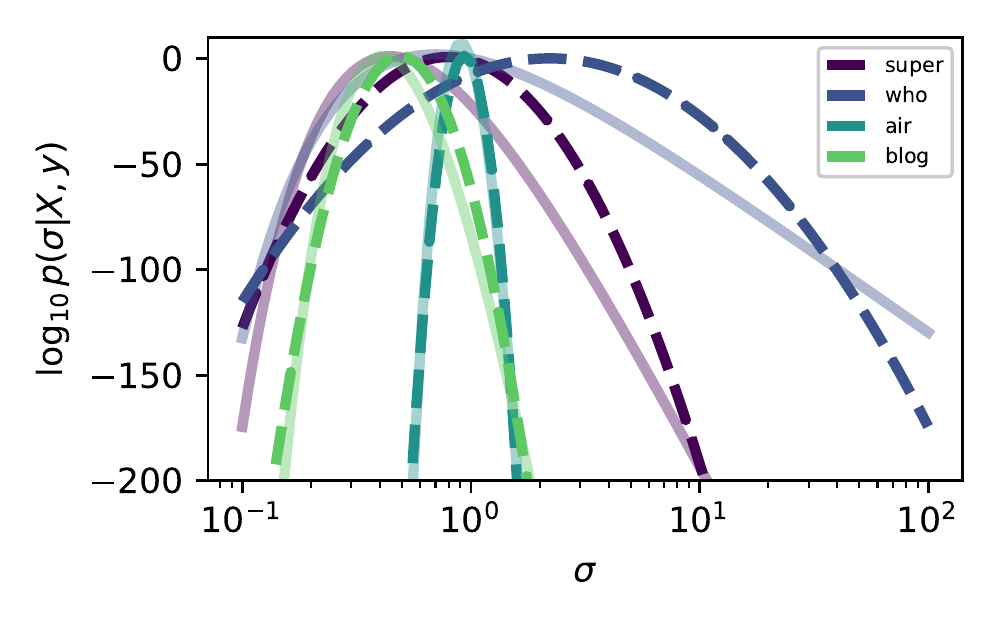}
\caption{\label{fig:BLR}Estimated densities for the posterior distribution of $\sigma^2$ in Bayesian linear regression under optimised exponential + affine transformations from (i) GGA reference, and (ii) Gaussian reference.}
\end{figure}

\paragraph{Invariant distribution of SGD.}

For inputs $X$ and labels $Y$ from a dataset $\mathcal{D}$, the least squares estimator for linear regression satisfies $\hat{\beta} = \min_\beta \tfrac12 \mathbb{E}_{X,Y\sim\mathcal{D}}(Y - X\beta)^2$. To solve for this estimator, one can apply stochastic gradient descent (SGD) sampling over independent $X_k,Y_k\sim \mathcal{D}$ to obtain the sequence of iterations
\[
\beta_{k+1} = (I - \delta X_k X_k^\top) \beta_k + \delta Y_k X_k
\]
for a step size $\delta > 0$. For large $\delta$, the iterates $\beta_k$ typically exhibit heavy-tailed fluctuations \citep{hodgkinson2021multiplicative}.
In this regard, this sequence of iterates has been used as a simple model for more general stochastic optimization dynamics \citep{gurbuzbalaban2021heavy,hodgkinson2021multiplicative}. In particular, generalization performance has been tied to the heaviness of the tails in the iterates \citep{simsekli2019tail}. Here, we use our algebra to predict the tail behaviour in a simple one-dimensional setting where $X_k \sim \mathcal{N}(0,\sigma^2)$ and $Y_k \sim \mathcal{N}(0,1)$. From classical theory \citep{buraczewski2016stochastic}, it is known that $X_k$ converges in distribution to a power law with tail exponent $\alpha > 0$ satisfying $\mathbb{E}|1 - \delta X_k^2|^\alpha = 1$. 
In \Cref{fig:SGD}, 
we plot the density of the representative for $\beta_{10^4}$ obtained using our algebra against a kernel density estimate using $10^6$ samples when $\sigma \in \{0.4,0.5,0.6\}$ and $\delta = 2$. In all cases, the density obtained from the algebra provides a surprisingly close fit. 

\begin{figure}[h]
\centering
\includegraphics[width=0.34\textwidth]{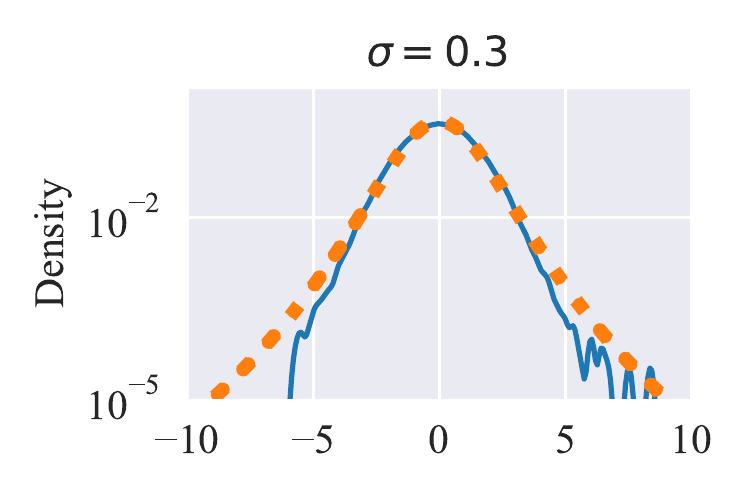}\hspace{-.4cm}
\includegraphics[width=0.34\textwidth]{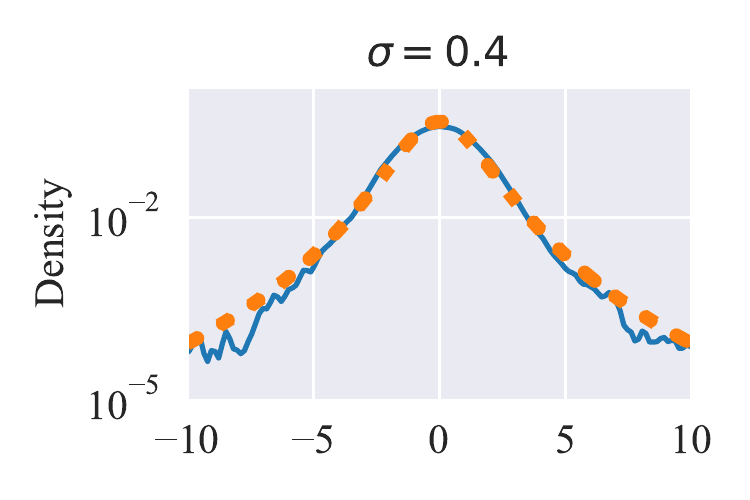}\hspace{-.4cm}
\includegraphics[width=0.34\textwidth]{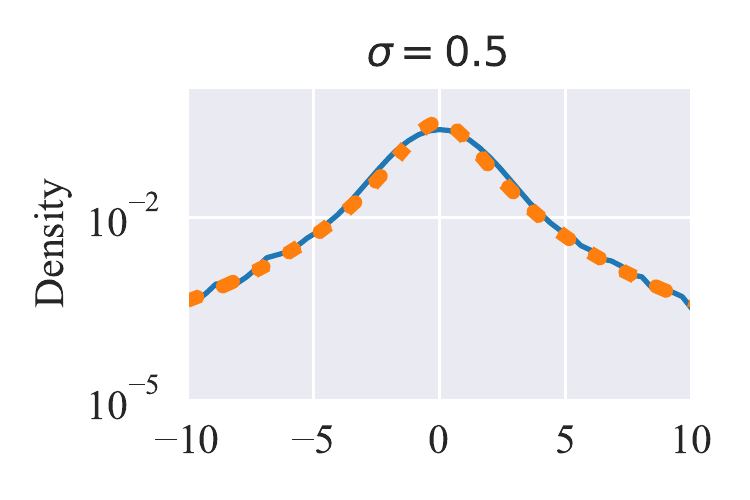}
\caption{\label{fig:SGD}Kernel density estimate of iterates of SGD (blue) vs. GGA predicted tail behaviour (orange)}
\end{figure}

\section{Related Work}

\textbf{Heavy tails and probabilistic machine learning.}
For studying heavy tails, methods based on subexponential distributions
\citep{goldie1998subexponential} and generalized Pareto distributions (GPD) (or
equivalently, regularly varying distributions \citep{tajvidi2003confidence})
have received significant attention historically. For example, \cite{mikosch} presents closure theorems for regularly varying distributions which are special cases of \Cref{prop:gga_add} and \Cref{lem:lipschitz}.
Heavy tails often have a profound impact on probabilistic machine learning methods: in particular, the observation that density ratios $\frac{p(x)}{q(x)}$ tend to be heavy tailed
has resulted in new methods for smoothing importance sampling \citep{vehtari2015pareto},
adaptively modifying divergences \citep{wang2018variational}, and
diagnosing VI through the Pareto $\hat{k}$ diagnostic \citep{yao2018yes}.
These works are complementary to our paper, and our reported results include $\hat{k}$ diagnostics for VI and $\hat\alpha$ tail index estimates based on GPD.

Our work considers heavy-tailed targets $p(x)$ which is the same setting as 
\cite{jaini2020tails,ftvi}. 
Whereas those respective works
lump the tail parameter in as another variational parameter and may be more generally applicable, the GGA may be applied before samples are drawn and leads to perfectly calibrated tails when applicable.

\textbf{Probabilistic programming.}
PPLs can be broadly characterized by the inference algorithms they support, such as:
Gibbs sampling over Bayes nets \citep{spiegelhalter1996bugs,de2017programming},
stochastic control flow \citep{goodman2012church,wingate2011lightweight},
deep stochastic VI \citep{tran2018simple,bingham2019pyro}, or
Hamiltonian Monte-Carlo \citep{carpenter2017stan,xu2020advancedhmc}. Our implementation target
\texttt{beanmachine} \citep{tehrani2020bean} is a declarative PPL selected
due to availability of a PPL compiler and support for static analysis plugins.
Similar to \cite{bingham2019pyro,siddharth2017learning}, it uses PyTorch \citep{paszke2019pytorch} for GPU tensors and automatic differentiation.
Synthesizing an approximating distribution during PPL compilation (\Cref{sec:impl}) is
also performed in the Stan language by \cite{kucukelbir2017automatic} and normalizing
flow extensions in \cite{webb2019aml}. We compare directly against these related density approximators in \Cref{sec:experiments}.

\textbf{Static analysis.}
There is a long history of formal methods and probabilistic programming in the literature \citep{kozen1979semantics,jones1989probabilistic}, with much of the research \citep{claret2013bayesian} concerned with defining formal semantics and establishing invariants \citep{wang2018pmaf} (see \cite{bernstein2019static} for a recent review).
Static analysis uses the abstract syntax tree (AST) representation of a program in order to compute invariants (e.g., the return type of a function, the number of classes implementing a trait) without executing the underlying program. 
It has traditionally been applied in the context of formalizing semantics \citep{kozen1979semantics}, and it has been used to verify probabilistic programs by ensuring termination, bounding random values values
\citep{sankaranarayanan2013static}. As dynamic analysis in a PPL is less reliable due to non-determinism, static analysis techniques for PPLs become essential. 
As a recent example, \cite{lee2019towards} proposes a static analyzer for the Pyro PPL \citep{bingham2019pyro} to verify distribution supports and avoid $-\texttt{Inf}$ log probabilities.
More relevant to our work are applications of static analysis to improve inference: \cite{nori2014r2} and \cite{cusumano2019gen} both employ static analysis to inform choice of inference method. 
However, both works do not account for heavy tails whereas the primary goal of GGA-based analysis is to ensure tails are properly modelled.

\section{Conclusion}\label{sec:conclusion}

In this work, we have proposed a novel systematic approach for conducting tail inferential static PPL analysis.
We have done this by defining a heavy-tailed algebra, and by implementing a three-parameter generalized Gamma algebra into a PPL compiler. 
Initial results are promising, showing that improved inference with simpler approximation families is possible when combined with tail metadata.
While already useful, our generalized Gamma algebra and its implementation currently have
several notable directions for improvement/extension:
\begin{itemize}[leftmargin=*]
\item 
The most significant omission to the algebra is classification of log-normal tails.
Addition may be treated using \cite{gulisashvili2016tail}, but multiplication with log-normal tails remains elusive. 
\item 
Since the algebra assumes independence, handling of dependencies between defined random variables must be conducted externally. This can be addressed using a symbolic package to decompose complex expressions into operations on independent random variables.
\item 
Scale coefficients $\sigma$ for conditional distributions may often be inexact, as exact marginalization in general is NP-hard \citep{koller2009probabilistic}. Treatment of disintegration using symbolic manipulations is a significant open problem, with some basic developments \citep{shan2017exact,cho2019disintegration}.
\item 
Compile-time static analysis is only applicable to fixed model structures. Open-universe models \citep{milch2010extending}
and PPLs to support them \citep{bingham2019pyro} are an important future research direction.
\end{itemize}
Given the recent interest in heavy-tailed aspects of machine learning more generally (e.g., see \cite{MM19_HTSR_ICML,MM20_SDM,MM20a_trends_NatComm,hodgkinson2021multiplicative,ftvi} and references therein), improving upon these directions is important future work

\paragraph{\textbf{Acknowledgments.}}
We would like to acknowledge the DOE, IARPA, NSF, and ONR as well as a J. P. Morgan Chase Faculty Research Award for providing partial support of this work.

\ifdefined\remappendix
\newpage

\appendix

\begin{center}

\Large \bf APPENDIX %

\end{center}

\section{Operations in the Generalized Gamma Algebra}\label{sec:gga_operations}

In this section, we provide explanations, references, and new results for how operations on random variables affect their GGA tails.
A summary of this, useful for referencing, appeared in  \Cref{tab:gga_operations}.

\subsection{Ordering} 
A total ordering is imposed on the equivalence classes of $\mathcal{G}$ according to the heaviness of tails. In particular, we say that $(\nu_1,\sigma_1,\rho_1) \leq (\nu_2,\sigma_2,\rho_2)$ if $(x^{\nu_1} e^{-\sigma_1 x^{\rho_1}}) / (x^{\nu_2} e^{-\sigma_2 x^{\rho_2}})$ is bounded as $x \to \infty$. As usual, we say $(\nu_1,\sigma_1,\rho_1) < (\nu_2,\sigma_2,\rho_2)$ if $(\nu_1,\sigma_1,\rho_1) \leq (\nu_2,\sigma_2,\rho_2)$ but $(\nu_1,\sigma_1,\rho_1) \not\equiv (\nu_2,\sigma_2,\rho_2)$.

\subsection{Addition} Tails of this form are  closed under addition. 
Combining subexponentiality for $\rho < 1$
\cite[Chapter X.1]{asmussen2010ruin}, 
with \cite[Thm 3.1 and Eqn. (8.3)]{asmussen2017tail}, we obtain the following Proposition \ref{prop:gga_add} for exactness of the proposed GGA addition operation. 
\begin{proposition}
\label{prop:gga_add}
Denoting the addition of random variables (additive convolution of densities) by $\oplus$,
\begin{equation}
(\nu_{1},\sigma_{1},\rho_{1})\oplus(\nu_{2},\sigma_{2},\rho_{2})\equiv \begin{cases}
\max\{(\nu_{1},\sigma_{1},\rho_{1}),(\nu_{2},\sigma_{2},\rho_{2})\} & \text{ if }\rho_{1}\neq\rho_{2}\text{ or }\rho_{1},\rho_{2}<1\\
\left(\nu_{1}+\nu_{2}+1,\min\{\sigma_{1},\sigma_{2}\},1\right) & \text{ if }\rho_{1}=\rho_{2}=1\\
(\nu_{1}+\nu_{2}+1-\frac{\rho}{2},(\sigma_{1}^{-\frac{1}{\rho-1}}+\sigma_{2}^{-\frac{1}{\rho-1}})^{1-\rho},\rho) & \text{ if }\rho=\rho_{1}=\rho_{2}>1.
\end{cases}
\end{equation}
\end{proposition}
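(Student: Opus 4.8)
The plan is to work directly with the additive convolution of the tail densities and to separate the three regimes according to the behaviour of the exponent. Writing $p_{|X|}(t)\sim c_1 t^{\nu_1}e^{-\sigma_1 t^{\rho_1}}$ and $p_{|Y|}(t)\sim c_2 t^{\nu_2}e^{-\sigma_2 t^{\rho_2}}$, I would first reduce the statement to the upper-tail asymptotics of
\[
p_{|X|+|Y|}(x)=\int_0^x p_{|X|}(t)\,p_{|Y|}(x-t)\,\dd t ,
\]
justifying by a standard truncation argument that the non-tail parts of each density contribute negligibly, so that the integrand may be replaced by $c_1 c_2\, t^{\nu_1}(x-t)^{\nu_2}e^{-g(t)}$ with $g(t)=\sigma_1 t^{\rho_1}+\sigma_2(x-t)^{\rho_2}$. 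All three cases then reduce to understanding this single integral.

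For the first case ($\rho_1\neq\rho_2$ or $\rho_1,\rho_2<1$), I would invoke the single-big-jump principle: these densities are \emph{subexponential}, so the convolution mass concentrates where one argument is close to $x$ and the other is $O(1)$. Appealing to the density-level subexponential closure theorems (Chapter~X.1 of \cite{asmussen2010ruin}) gives $p_{|X|+|Y|}(x)\sim p_{|X|}(x)+p_{|Y|}(x)$, whose equivalence class is the maximum of the two summands under the ordering. When the two $\rho$ differ, the larger-$\rho$ tail is automatically the lighter one and is absorbed, so the max is again selected.

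The genuinely new content is the regime $\rho=\rho_1=\rho_2>1$, where the tail of the sum is produced not by a single big jump but by an optimal balanced split, and I would attack it with Laplace's method. Since $g$ is strictly convex, the first-order condition $\sigma_1\rho\,t^{\rho-1}=\sigma_2\rho(x-t)^{\rho-1}$ yields the unique interior minimiser $t^*=x\,\sigma_1^{-1/(\rho-1)}/\big(\sigma_1^{-1/(\rho-1)}+\sigma_2^{-1/(\rho-1)}\big)$ with minimal value $g(t^*)=\big(\sigma_1^{-1/(\rho-1)}+\sigma_2^{-1/(\rho-1)}\big)^{1-\rho}x^\rho$, which is exactly the claimed new scale $\sigma$. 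Expanding to second order, $g''(t^*)\asymp x^{\rho-2}$, so the Gaussian integral about $t^*$ contributes a width factor of order $x^{(2-\rho)/2}$; multiplying by the polynomial prefactor $t^{\nu_1}(x-t)^{\nu_2}\asymp x^{\nu_1+\nu_2}$ evaluated at $t^*$ produces the claimed power $\nu_1+\nu_2+1-\tfrac{\rho}{2}$. Rigorously controlling this Laplace expansion — uniformly bounding the integrand away from $t^*$ and verifying dominated convergence for the rescaled integral — is where I expect the real work to lie, and it is precisely what Theorem~3.1 and Eqn.~(8.3) of \cite{asmussen2017tail} supply for Weibull-like sums.

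Finally, for the boundary $\rho_1=\rho_2=1$ the exponent $g$ is affine, so the Laplace expansion degenerates and a separate treatment is needed. When $\sigma_1=\sigma_2=\sigma$ the factor $e^{-\sigma x}$ pulls out of the integral, leaving the Beta integral $\int_0^x t^{\nu_1}(x-t)^{\nu_2}\,\dd t=B(\nu_1+1,\nu_2+1)\,x^{\nu_1+\nu_2+1}$ and hence the class $(\nu_1+\nu_2+1,\sigma,1)$; when $\sigma_1\neq\sigma_2$ the slower exponential dominates and the single-big-jump argument of the first case selects $\min\{\sigma_1,\sigma_2\}$. Assembling the three regimes yields the stated piecewise formula.
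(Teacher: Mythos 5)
Your proposal follows essentially the same route as the paper: the paper's entire proof consists of citing subexponential closure (\cite[Ch.~X.1]{asmussen2010ruin}) for the mismatched-$\rho$ and $\rho_1,\rho_2<1$ cases and the Weibull-like sum asymptotics of \cite[Thm~3.1 and Eqn.~(8.3)]{asmussen2017tail} for $\rho=\rho_1=\rho_2\geq 1$, and you invoke exactly these two sources while additionally writing out the Laplace-method calculation that underlies the second. That calculation checks out: the interior minimiser $t^*$, the value $g(t^*)=(\sigma_1^{-1/(\rho-1)}+\sigma_2^{-1/(\rho-1)})^{1-\rho}x^{\rho}$, the width factor $x^{(2-\rho)/2}$ coming from $g''(t^*)\asymp x^{\rho-2}$, and the prefactor $x^{\nu_1+\nu_2}$ at $t^*$ assemble correctly into the stated class, so your sketch supplies more of the analytic content than the paper does.

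One sub-case does not close as written. For $\rho_1=\rho_2=1$ with $\sigma_1\neq\sigma_2$ (say $\sigma_1<\sigma_2$), your own single-big-jump reasoning gives $p_{|X|+|Y|}(x)\sim e^{-\sigma_1 x}\int_0^x (x-u)^{\nu_1}u^{\nu_2}e^{-(\sigma_2-\sigma_1)u}\,\dd u \sim C\,x^{\nu_1}e^{-\sigma_1 x}$, i.e.\ the class $\max\{(\nu_1,\sigma_1,1),(\nu_2,\sigma_2,1)\}$, not the stated class $(\nu_1+\nu_2+1,\min\{\sigma_1,\sigma_2\},1)$; the two coincide only when $\nu_2=-1$ or $\sigma_1=\sigma_2$ (where your Beta-integral computation is exact). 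So the closing claim that assembling the regimes ``yields the stated piecewise formula'' overstates what your argument delivers there. The tension lies in the proposition's second line rather than in your analysis, but a complete proof should either restrict that line to $\sigma_1=\sigma_2$, or explicitly argue why the exponent $\nu_1+\nu_2+1$ is nonetheless the intended (e.g.\ worst-case) value when the scales differ.
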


\subsection{Powers} For all exponents $\beta > 0$, by invoking a change of variables $x \mapsto x^\beta$, it is easy to show that
$(\nu, \sigma, \rho)^\beta \equiv  \left(\frac{\nu+1}{\beta}-1, \sigma, \frac{\rho}\beta\right).$
\subsection{Reciprocals}
We \emph{define} negative powers and reciprocals equivalently to positive powers in the case $\beta < 0$. This equivalence cannot be proven to hold in general since we cannot determine tail asymptotics of the reciprocal without knowledge of its behaviour around zero. Therefore, we implicitly assume that the behaviour around zero mimics the tail behaviour, that is, \Cref{eq:GenGammaTails} holds as $x \to 0^+$. Note that this can only hold provided $(\nu + 1)/\rho > 0$ and $\rho \neq 0$. To account for all other cases, including $\mathcal{R}_\nu$, we assume that the density of $X$ approaches some nonzero value near zero. In this case, Lemma \ref{lem:RecipCauchy} defines the reciprocal to be $\mathcal{R}_{2}$. 
\begin{lemma}
\label{lem:RecipCauchy}
Assume that a random variable $X$ has a density $p$ that is continuous at zero and $p(0) > 0$. Then $X^{-1} \equiv \mathcal{R}_2$. 
\end{lemma}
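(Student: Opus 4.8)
The plan is to analyze the reciprocal directly at the level of densities, exploiting that large values of $|X^{-1}|$ correspond to values of $|X|$ near the origin, where the hypotheses ($p$ continuous at $0$ and $p(0)>0$) pin down the behaviour. First I would reduce to the absolute value, since the class $\mathcal{R}_2$ is defined through $p_{|X^{-1}|}$: writing $W=|X|$, the density of $W$ on $(0,\infty)$ is $p_W(w)=p(w)+p(-w)$, and continuity of $p$ at $0$ gives $p_W(w)\to 2p(0)$ as $w\to 0^+$. Because $X$ has a density, $X\neq 0$ almost surely, so $X^{-1}$ is well defined and $|X^{-1}|=1/W$.

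Next I would apply the change of variables $v=1/w$, which is a smooth decreasing bijection of $(0,\infty)$ onto itself with $|\dd w/\dd v|=v^{-2}$. This yields the exact identity $p_{|X^{-1}|}(v)=p_W(1/v)\,v^{-2}$ for $v>0$. Letting $v\to\infty$ so that $1/v\to 0^+$, the factor $p_W(1/v)$ converges to $2p(0)>0$, and hence $p_{|X^{-1}|}(v)\sim 2p(0)\,v^{-2}$ as $v\to\infty$. Comparing with \Cref{eq:GenGammaTails}, this is precisely a tail of the form $c\,v^{\nu}e^{-\sigma v^{\rho}}$ with $\nu=-2$ and $\rho=0$, i.e. the regularly varying class $(-2,\sigma,0)$, which by the convention fixed after \Cref{def:gg_tail} is $\mathcal{R}_2$ irrespective of the value of $\sigma$. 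This establishes the claim.

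I expect the only delicate points to be bookkeeping rather than substance. The main thing to get right is the two-sided use of continuity: both $p(w)$ and $p(-w)$ must tend to $p(0)$, so that the constant in the asymptotic is $2p(0)$; for one-sided or signed supports the constant changes but the exponents $\rho=0$, $\nu=-2$ do not, so membership in $\mathcal{R}_2$ is unaffected. A secondary subtlety is that the change-of-variables formula for the density of $1/W$ is valid pointwise near infinity without further regularity of $p$ away from $0$, which is why I prefer this route over a survival-function argument: the latter gives $\mathbb{P}(|X^{-1}|>v)=\mathbb{P}(W<1/v)\sim 2p(0)/v$ immediately from continuity, but passing from this tail estimate to the density asymptotic would require an additional monotone-density / Karamata-type step that the direct density transform sidesteps.
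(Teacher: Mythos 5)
Your proposal is correct and follows essentially the same route as the paper's proof: a change of variables $q(x)=|x|^{-2}p(x^{-1})$ followed by continuity of $p$ at zero to conclude $q(x)\sim c|x|^{-2}$, hence $X^{-1}\equiv\mathcal{R}_2$. The only cosmetic difference is that you phrase the argument through $W=|X|$ (picking up the constant $2p(0)$ rather than $p(0)$), which does not affect the equivalence class.
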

\begin{proof}
From a change of variables, the density $q$ of $X^{-1}$ is given by $q(x) = |x|^{-2}p(x^{-1})$. By assumption, as $|x|\to \infty$, $q(x) \sim p(0) |x|^{-2}$. Therefore, $X^{-1} \equiv \mathcal{R}_2$.
\end{proof}

\subsection{Multiplication}
For any $c \in \mathbb{R} \backslash \{0\}$, it can be readily seen from a change of variables $x \mapsto c x$ that $c (\nu, \sigma, \rho) = (\nu, \sigma / |c|^\rho, \rho)$. However, the case of multiplication convolution is not as straightforward. While additive convolutions of generalized Gamma random variables are relatively well-explored, to our knowledge, multiplicative convolution has not been examined at this level of generality. It turns out that the class $\mathcal{G}$ is also closed under multiplication (assuming independence of random variables), as we show in the following result. The proof requires some preliminary background on Mellin transforms and the Fox H function, which we cover in Appendix \ref{sec:Mellin}.

\begin{proposition}
\label{prop:Mult}
Denoting the multiplication of independent random variables (multiplicative convolution) by $\otimes$,
\[
(\nu_{1},\sigma_{1},\rho_{1})\otimes(\nu_{2},\sigma_{2},\rho_{2})
\equiv\begin{cases}
\left(\frac{1}{\mu}\left(\frac{\nu_{1}}{|\rho_{1}|}+\frac{\nu_{2}}{|\rho_{2}|}+\frac{1}{2}\right),\sigma,-\frac{1}{\mu}\right) & \text{ if }\rho_{1},\rho_{2}<0\\
\left(\frac{1}{\mu}\left(\frac{\nu_{1}}{\rho_{1}}+\frac{\nu_{2}}{\rho_{2}}-\frac{1}{2}\right),\sigma,\frac{1}{\mu}\right) & \text{ if }\rho_{1},\rho_{2}>0\\
\mathcal{R}_{|\nu_1|} & \mbox{ if }\rho_{1}\leq0,\rho_{2}>0 \\
\mathcal{R}_{\min\{|\nu_1|,|\nu_2|\}} & \mbox{ if }\rho_{1}=0,\rho_{2}=0
\end{cases}
\]
where $\mu=\frac{1}{|\rho_{1}|}+\frac{1}{|\rho_{2}|}=\frac{|\rho_{1}|+|\rho_{2}|}{|\rho_{1}\rho_{2}|}$ and $\sigma=\mu(\sigma_{1}|\rho_{1}|)^{\frac{1}{\mu|\rho_{1}|}}(\sigma_{2}|\rho_{2}|)^{\frac{1}{\mu|\rho_{2}|}}$. 
\end{proposition}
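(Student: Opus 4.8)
The plan is to work at the level of Mellin transforms, which turn the multiplicative convolution defining $X_1 X_2$ into an ordinary product. Writing $p_i(x) = c_i x^{\nu_i} e^{-\sigma_i x^{\rho_i}}$ for the tail-representative generalized Gamma densities, a substitution $u = \sigma_i x^{\rho_i}$ shows that for $\rho_i > 0$ the Mellin transform is
$$\mathcal{M}[p_i](s) = \int_0^\infty x^{s-1} p_i(x)\,\mathrm{d}x = \frac{c_i}{\rho_i}\,\sigma_i^{-(s+\nu_i)/\rho_i}\,\Gamma\!\left(\tfrac{s+\nu_i}{\rho_i}\right),$$
valid on a suitable vertical strip, with the analogous reflected form (a $\Gamma$ with argument $-(s+\nu_i)/|\rho_i|$, hence poles on the positive real axis) when $\rho_i < 0$. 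Since Mellin transforms multiply under multiplicative convolution, $\mathcal{M}[p_{X_1 X_2}](s) = \mathcal{M}[p_1](s)\,\mathcal{M}[p_2](s)$, a product of two Gamma factors times an exponential-in-$s$ scale factor. This is exactly the Mellin--Barnes kernel of a Fox $H$-function, so the density of $X_1 X_2$ is recovered by the inverse transform $p_{X_1 X_2}(x) = \frac{1}{2\pi i}\int_{c-i\infty}^{c+i\infty} \mathcal{M}[p_1](s)\mathcal{M}[p_2](s)\,x^{-s}\,\mathrm{d}s$, and everything reduces to reading off the $x\to\infty$ asymptotics of this integral. The background on Mellin transforms and the $H$-function needed to make this rigorous is collected in Appendix~\ref{sec:Mellin}.

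For the main case $\rho_1,\rho_2 > 0$ the plan is a steepest-descent (saddle-point) analysis of the inversion integral. Collecting the scale factors into $A^{-s}$ with $A = \sigma_1^{1/\rho_1}\sigma_2^{1/\rho_2}$ and applying Stirling's formula to the two Gamma functions, the exponent $\varphi(s)$ of the integrand is, to leading order, $\varphi(s) \approx s[\mu\log s - \mu - \log(Ax) - \tfrac{\log\rho_1}{\rho_1} - \tfrac{\log\rho_2}{\rho_2}]$ with $\mu = \tfrac{1}{\rho_1}+\tfrac{1}{\rho_2}$. Solving $\varphi'(s)=0$ gives the saddle $s^\ast \sim [(\sigma_1\rho_1)^{1/\rho_1}(\sigma_2\rho_2)^{1/\rho_2}\,x]^{1/\mu}$, which grows like $x^{1/\mu}$ and immediately produces the predicted exponent $\rho = 1/\mu$; evaluating the exponent at the saddle gives $\varphi(s^\ast) = -\mu s^\ast$, which reproduces the stretched-exponential rate and, after simplification, exactly the stated $\sigma = \mu(\sigma_1\rho_1)^{1/(\mu\rho_1)}(\sigma_2\rho_2)^{1/(\mu\rho_2)}$. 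The polynomial prefactor $x^\nu$ then comes from the subleading pieces: the $\nu_i$ shifts inside the Gamma arguments, the $-\tfrac12\log$ correction in Stirling, and the Gaussian fluctuation factor $\sqrt{2\pi/\varphi''(s^\ast)}$ from integrating across the saddle; combining these yields $\nu = \tfrac{1}{\mu}(\tfrac{\nu_1}{\rho_1}+\tfrac{\nu_2}{\rho_2}-\tfrac12)$.

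The case $\rho_1,\rho_2 < 0$ I would not redo from scratch: writing each $X_i = Y_i^{-1}$ with $Y_i$ of positive-$\rho$ type and using $X_1 X_2 = (Y_1 Y_2)^{-1}$, the result follows by applying the positive case to $Y_1 Y_2$ and then the reciprocal rule from \Cref{tab:gga_operations}; a short bookkeeping check confirms the parameters collapse to $(\tfrac{1}{\mu}(\tfrac{\nu_1}{|\rho_1|}+\tfrac{\nu_2}{|\rho_2|}+\tfrac12),\sigma,-\tfrac1\mu)$. The remaining two cases are governed by power-law (regularly varying) behaviour rather than a saddle. When one factor is light-tailed ($\rho_2 > 0$) and the other regularly varying ($\rho_1 \le 0$), the product inherits the heavier index: this is a Breiman-type statement, and on the Mellin side it is the observation that the rightmost singularity of $\mathcal{M}[p_1]\mathcal{M}[p_2]$ is the pole of $\mathcal{M}[p_1]$ at $s = |\nu_1|$, while the light factor contributes only the finite constant $\mathcal{M}[p_2](|\nu_1|) = \mathbb{E}[X_2^{|\nu_1|-1}] < \infty$ there, giving $\mathcal{R}_{|\nu_1|}$. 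When both are regularly varying ($\rho_1=\rho_2=0$), the two polar structures compete and the rightmost pole sits at $s = \min\{|\nu_1|,|\nu_2|\}$, yielding $\mathcal{R}_{\min\{|\nu_1|,|\nu_2|\}}$.

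The hardest part will be the prefactor in the saddle-point case: extracting the exponential rate $\sigma$ and exponent $\rho$ only needs the leading saddle, but pinning down $\nu$ --- in particular the $-\tfrac12$ --- requires carefully combining the subleading Stirling terms with the Gaussian curvature factor, and justifying the contour deformation (bounding $\Gamma$ along vertical lines, checking that the Bromwich contour can be pushed onto the steepest-descent path without crossing poles) so that the formal expansion is an asymptotic equality rather than a mere bound. The equality-of-classes conclusion ($\equiv$, not $\leq$) hinges precisely on this being a genuine two-sided asymptotic.
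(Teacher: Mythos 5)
Your route is essentially the paper's: Mellin transforms turning the multiplicative convolution into a product of Gamma factors, recognition of the resulting Mellin--Barnes integral as a Fox $H$-function, reduction of the $\rho_1,\rho_2<0$ case to the positive case via reciprocals, and Breiman's lemma for the mixed and doubly regularly varying cases. Your Mellin transform formula and your saddle-point bookkeeping (the location $s^\ast \sim ((\sigma_1\rho_1)^{1/\rho_1}(\sigma_2\rho_2)^{1/\rho_2}x)^{1/\mu}$ and the value $\varphi(s^\ast)=-\mu s^\ast$) check out against the stated $\sigma$ and $\rho$. The one place you do more work than necessary is the asymptotic analysis itself: the part you flag as hardest --- the contour deformation, the subleading Stirling terms, and the curvature factor needed to pin down the $-\tfrac12$ in $\nu$ --- is precisely what the paper outsources to the known expansion $H^{q,0}_{p,q}[z|\cdot] \sim c\, z^{(\delta+\frac12)/\mu}\exp(-\mu\beta^{-1/\mu}z^{1/\mu})$ from \cite[Theorem 1.3]{mathai2009h}, after which $\nu,\sigma,\rho$ are read off from $\mu$, $\delta$, $\beta$. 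Citing that result would close your proof without redoing the steepest-descent argument.

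There is one genuine gap. The proposition is a statement about equivalence classes: $X$ and $Y$ only have densities \emph{asymptotically} of generalized Gamma form, yet your Mellin computation silently replaces them by exact generalized Gamma densities before taking the product of transforms. You need to justify that the tail of $XY$ depends only on the tails of $X$ and $Y$. The paper does this with a localization step: choosing $0<a_-<a_+<1$ with $a_+=(1+\epsilon)\rho_2/(\rho_1+\rho_2)$ and $a_-=1-(1+\epsilon)\rho_1/(\rho_1+\rho_2)$, it bounds $\mathbb{P}(XY>x,\,X\notin[x^{a_-},x^{a_+}])\leq \mathbb{P}(X>x^{a_+})+\mathbb{P}(Y>x^{1-a_-})$ and shows this is $o(x^\nu e^{-\sigma x^\rho})$, so only the region where both factors are large (hence where the asymptotic form of each density is valid) contributes. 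Without this step your argument establishes the formula for exact generalized Gamma inputs but not for arbitrary members of the classes $(\nu_i,\sigma_i,\rho_i)$, and the two-sided equivalence $\equiv$ that you correctly identify as the point of the proposition would not follow.
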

\begin{proof}
The $\rho_1 \leq 0, \rho_2 > 0$ and $\rho_1 = \rho_2 = 0$ cases follow from Breiman's lemma \cite[Lemma B.5.1]{buraczewski2016stochastic}. Our argument proceeds similar to \cite{asmussen2017tail}. Assume that $\rho_1,\rho_2 > 0$ and let $0 < \epsilon < 1$ be such that $0 < a_- < a_+ < 1$, where
\[
a_+ = \frac{(1+\epsilon)\rho_2}{\rho_1+\rho_2},\qquad a_{-} = 1 - \frac{(1+\epsilon)\rho_1}{\rho_1+\rho_2}.
\]
Then for $\rho = \frac{\rho_1\rho_2}{\rho_1 + \rho_2}$, if $X \equiv (\nu_1,\sigma_1,\rho_1)$ and $Y \equiv (\nu_2,\sigma_2,\rho_2)$, then
\begin{align*}
\mathbb{P}(XY>x,X\notin[x^{a_{-}},x^{a_{+}}])  & \leq\mathbb{P}(X>x^{a_{+}})+\mathbb{P}(Y>x^{1-a_{-}}) \\
& \sim c_{1}x^{\nu_{1}a_{+}}e^{-\sigma_{1}x^{\rho_{1}a_{+}}}+c_{2}x^{\nu_{2}(1-a_{-})}e^{-\sigma_{2}x^{\rho_{2}(1-a_{-})}}\\
& \leq\left(c_{1}x^{\nu_{1}a_{+}}+c_{2}x^{\nu_{2}(1-a_{-})}\right)e^{-\min\{\sigma_{1},\sigma_{2}\}x^{(1+\epsilon)\rho}} = o(x^\nu e^{-\sigma x^\rho}),
\end{align*}
for any $\nu,\sigma > 0$. Hence, it will suffice to show the claimed tail asymptotics for the generalized Gamma distribution. In this case, since $a_- > 0$ and $a_+ < 1$, the tail of the distribution for the product of $X,Y$ depends only on the tail of the distributions for $X$ and $Y$. 

Therefore, assume without loss of generality that $p_X(x) = c_X x^{\nu_1} e^{-\sigma_1 x^{\rho_1}}$ and $p_Y(x) = c_Y x^{\nu_2} e^{-\sigma_2 x^{\rho_2}}$. Then
\[
\mathcal{M}_s[p_{XY}] = c_X c_Y \frac{\sigma_{1}^{-\nu_{1}/\rho_{1}}}{\rho_{1}}\frac{\sigma_{2}^{-\nu_{2}/\rho_{2}}}{\rho_{2}}\left(\sigma_{1}^{1/\rho_{1}}\sigma_{2}^{1/\rho_{2}}\right)^{-s}\Gamma\left(\frac{\nu_{1}}{\rho_{1}}+\frac{s}{\rho_{1}}\right)\Gamma\left(\frac{\nu_{2}}{\rho_{2}}+\frac{s}{\rho_{2}}\right).
\]
Consequently,
\[
p_{XY}(z) = c_X c_Y \frac{\sigma_{1}^{-\nu_{1}/\rho_{1}}}{\rho_{1}}\frac{\sigma_{2}^{-\nu_{2}/\rho_{2}}}{\rho_{2}}H_{0,2}^{2,0}\left[\sigma_{1}^{1/\rho_{1}}\sigma_{2}^{1/\rho_{2}}z\left|\substack{-\\
(\frac{\nu_{1}}{\rho_{1}},\frac{1}{\rho_{1}}),(\frac{\nu_{2}}{\rho_{2}},\frac{1}{\rho_{2}})
}
\right.\right]
\]
Computing the corresponding $\beta,\delta,\mu$ for the asymptotic expansion, we find that
\[
\mu=\frac{1}{\rho_{1}}+\frac{1}{\rho_{2}},\qquad\delta=\frac{\nu_{1}}{\rho_{1}}+\frac{\nu_{2}}{\rho_{2}}-1,\qquad\beta=\rho_{1}^{-1/\rho_{1}}\rho_{2}^{-1/\rho_{2}}.
\]
Consequently, for some $c > 0$,
\[
p_{XY}(z) \sim c z^{\frac{1}{\mu}(\frac{1}{2}+\delta)}\exp\left(-\mu\beta^{-\frac{1}{\mu}}(\sigma_{1}^{1/\rho_{1}}\sigma_{2}^{1/\rho_{2}})^{\frac{1}{\mu}}z^{\frac{1}{\mu}}\right),
\]
which completes the $\rho_1,\rho_2 > 0$ case. The final case follows by composing the multiplication and reciprocal operations. Note that
\begin{align*}
(\nu_{1},\sigma_{1},-\rho_{1})^{-1}\otimes(\nu_{2},\sigma_{2},-\rho_{2})^{-1}	&\equiv\left(-\nu_{1}-2,\sigma_{1},\rho_{1}\right)\otimes\left(-\nu_{2}-2,\sigma_{2},\rho_{2}\right)\\
&\equiv\left(\frac{1}{\mu}\left(\frac{-\nu_{1}-2}{\rho_{1}}+\frac{-\nu_{2}-2}{\rho_{2}}-\frac{1}{2}\right),\sigma,\frac{1}{\mu}\right)\\
&\equiv\left(\frac{1}{\mu}\left(\frac{-\nu_{1}}{\rho_{1}}+\frac{-\nu_{2}}{\rho_{2}}-2\mu-\frac{1}{2}\right),\sigma,\frac{1}{\mu}\right)\\
&\equiv\left(\frac{1}{\mu}\left(\frac{-\nu_{1}}{\rho_{1}}+\frac{-\nu_{2}}{\rho_{2}}-\frac{1}{2}\right)-2,\sigma,\frac{1}{\mu}\right),
\end{align*}
and therefore
\[
(\nu_{1},\sigma_{1},-\rho_{1})\otimes(\nu_{2},\sigma_{2},-\rho_{2})\equiv\left(\frac{1}{\mu}\left(\frac{\nu_{1}}{\rho_{1}}+\frac{\nu_{2}}{\rho_{2}}+\frac{1}{2}\right),\sigma,-\frac{1}{\mu}\right).
\]
\end{proof}

\subsection{Product of Densities}

We can also consider a product of densities operation acting on two random variables $X,Y$, denoted $X \& Y$, by
$p_{X \& Y}(x) = c p_X(x) p_Y(x)$,
where $c > 0$ is an appropriate normalizing constant and $p_X,p_Y,p_{X\& Y}$ are the densities of $X$, $Y$, and $X \& Y$, respectively. In terms of the equivalence classes:
\[
(\nu_1,\sigma_1,\rho_1)\&(\nu_2,\sigma_2,\rho_2) 
\equiv \begin{cases}
(\nu_{1}+\nu_{2},\sigma_{1},\rho_{1}) & \text{ if }\rho_{1}<\rho_{2}\\
(\nu_{1}+\nu_{2},\sigma_{1}+\sigma_{2},\rho) & \text{ if }\rho=\rho_{1}=\rho_{2}\\
(\nu_{1}+\nu_{2},\sigma_{2},\rho_{2}) & \text{ otherwise,}
\end{cases}
\]
which follows directly by taking the product of the generalized Gamma tails in \cref{eq:GenGammaTails}. 
Note that this particular operation does not require either $p_X$ or $p_Y$ to be normalized --- only the tail behaviour is needed. We may also use this to work out the tail behaviour of a posterior density, provided the tail behaviour of the likelihood in the parameters is known.

\subsection{Exponential and Logarithm} Tails of the generalized Gamma form are not closed under exponentiation or logarithms. Indeed, if both $X$ and $\exp X$ have generalized Gamma tails, then $X$ is exponentially distributed (and $\exp X$ has power law tails). As a workaround, we can consider an upper bound on the tail by projecting onto the nearest possible exponentially distributed / power law tail. If $\rho > 1$, then a change of variables shows the density of $\exp X$ satisfies
\[
p_{\exp X}(x) \sim \frac{c}{x}(\log x)^{\nu}\exp\left(-\sigma(\log x)^{\rho}\right)\leq\frac{\tilde{c}}{x}\exp\left(-\sigma(\log x)\right)=cx^{-\sigma-1},\,\mbox{ as }x \to \infty.
\]
The inverse of this operation sends $\mathcal{R}_{\sigma+1}$ to $(0,\sigma,1)$. With this in mind, we define the exponential and logarithmic operations according to the following: $\exp (\nu, \sigma, \rho) \equiv \mathcal{R}_{\sigma+1}$ if $\rho \geq 1$, otherwise $\mathcal{R}_1$; $\log (\nu, \sigma, \rho) \equiv (0, |\nu|-1, 1)$ if $\nu < -1$ and $\rho \leq 0$, otherwise $\mathcal{L}$.

\subsection{Lipschitz Functions}

There are many multivariate functions that cannot be readily represented in terms of the operations covered thus far. For these, it is important to specify the tail behaviour of pushforward measures under Lipschitz-continuous functions. Fortunately, this is covered by \Cref{lem:lipschitz} below, presented in \cite[Proposition 1.3]{ledoux2001concentration}. 
\begin{theorem}\label{lem:lipschitz}
For any Lipschitz continuous function $f:\mathbb{R}^d \to \mathbb{R}$ satisfying $\|f(x)-f(y)\|\leq L\|x -y\|$ for $x,y \in \mathbb{R}^d$, there is $f(X_1,\dots,X_d) \equiv L \max\{X_1,\dots,X_d\}.$ More generally, for any H\"{o}lder continuous function $f:\mathbb{R}^d\to\mathbb{R}$ satisfying $\|f(x)-f(y)\| \leq L\|x-y\|^\alpha$ for $x,y\in\mathbb{R}^d$, there is $f(X_1,\dots,X_d) \equiv L \max\{X_1^\alpha,\dots,X_d^\alpha\}$. 
\end{theorem}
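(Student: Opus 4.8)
The plan is to derive the stated tail domination directly from the deterministic Lipschitz estimate, reducing the multivariate problem to the tail of a single coordinate. The essential point is that the conclusion is fundamentally an \emph{upper} bound: an $L$-Lipschitz map cannot manufacture a tail heavier than that of its heaviest argument, and assigning this dominating class as the reported tail is exactly the conservative choice required by the representative-distribution construction of \Cref{ssec:repr_dist} (for the identity-outside-a-compact-interval flows used in practice, the tail is moreover preserved exactly, which is what makes ``$\equiv$'' literal there). I will therefore show that the tail class of $f(X_1,\dots,X_d)$ is bounded above, in the ordering of \Cref{sec:gga_operations}, by that of $L\max\{X_1,\dots,X_d\}$, which is the content behind the citation to \cite[Proposition 1.3]{ledoux2001concentration}.

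First I would fix a base point, say $x_0 = 0$, and use the Lipschitz hypothesis to write $|f(X)| \le |f(0)| + L\|X\|$ pointwise. The additive constant $|f(0)|$ shifts the argument by a bounded (super-light) amount and does not affect the tail class, so it suffices to control the tail of $\|X\|$. Since all norms on $\mathbb{R}^d$ are equivalent, there is a constant $c_d > 0$ with $\|X\| \le c_d\max_i |X_i|$, reducing the problem to the tail of $\max_i|X_i|$. Crucially, neither this step nor what follows requires independence of the $X_i$, since the Lipschitz estimate is distribution-free.

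Next I would bound the tail of the maximum by a union bound, $\mathbb{P}(\max_i |X_i| > s) \le \sum_{i=1}^d \mathbb{P}(|X_i| > s)$, and invoke the total ordering on $\mathcal{G}$: a finite sum of survival functions is asymptotically dominated by its heaviest summand, while $\max_i|X_i| \ge |X_{i^*}|$ gives the matching lower bound, so $\max_i|X_i| \equiv \max\{X_1,\dots,X_d\}$ genuinely. Combining the three steps gives $|f(X)| \le |f(0)| + Lc_d\max_i|X_i|$, so the tail of $f(X)$ is no heavier than that of the scalar multiple $Lc_d\max\{X_1,\dots,X_d\}$. Applying the scalar-multiplication rule $c(\nu,\sigma,\rho) \equiv (\nu,\sigma|c|^{-\rho},\rho)$ from \Cref{tab:gga_operations} then produces the claimed class, with the norm-equivalence factor $c_d$ entering only the scale $\sigma$ and leaving the exponents $\nu,\rho$ (the parameters that govern the qualitative tail) unchanged; when the Lipschitz condition is stated in the $\infty$-norm one has $c_d = 1$ and the scale is exactly $L$. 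The Hölder case is identical after replacing the Lipschitz bound by $|f(X)| \le |f(0)| + L\|X\|^\alpha$: since $t\mapsto t^\alpha$ is increasing on $[0,\infty)$, $\|X\|^\alpha \asymp (\max_i|X_i|)^\alpha = \max_i|X_i|^\alpha$, and the power operation $(\nu,\sigma,\rho)^\alpha$ commutes with the maximum, yielding $L\max\{X_1^\alpha,\dots,X_d^\alpha\}$.

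The main obstacle I anticipate is bookkeeping around the notation ``$\equiv$'' rather than ``$\le$'': the deterministic estimate only yields domination, and the reverse inequality fails for degenerate $f$ (e.g.\ $f\equiv 0$, which collapses to a super-light class). I would resolve this by stating explicitly that the GGA reports the dominating class, which is the correct conservative behaviour for downstream calibration, and by emphasizing that the norm-equivalence constant is harmless since it perturbs only $\sigma$. A secondary subtlety is extending the union-bound/heaviest-tail step to the boundary classes $\mathcal{R}_1$ (super-heavy) and $\mathcal{L}$ (super-light), which I would dispatch by appealing to the stated consistency of these classes under the ordering and under scalar multiplication.
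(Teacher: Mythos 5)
The paper offers no self-contained argument for this result: it simply invokes the concentration-of-measure inequality \cite[Proposition 1.3]{ledoux2001concentration}, which bounds $\mu(\{f \geq m_f + r\})$ by $\alpha_\mu(r/L)$ in terms of the concentration function of the joint law. Your route --- a pointwise bound $|f(X)| \leq |f(0)| + L\|X\|$, norm equivalence, and a union bound over the coordinates --- is therefore genuinely different and more elementary, and your observation that the conclusion can only be a domination in the ordering (the reverse inequality fails for $f \equiv 0$) is correct and worth making explicit.

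However, there is a genuine gap in the step where you declare the norm-equivalence constant harmless. Your chain $|f(X)| \leq |f(0)| + L\|X\| \leq |f(0)| + L c_d \max_i |X_i|$ establishes membership in the class of $L c_d \max\{X_1,\dots,X_d\}$, not $L\max\{X_1,\dots,X_d\}$. In this algebra $\sigma$ is a load-bearing parameter: scalar multiplication sends $(\nu,\sigma,\rho)$ to $(\nu,\sigma|c|^{-\rho},\rho)$, and for $\rho > 0$ and $c_d > 1$ the class of $Lc_d\max\{X_i\}$ is \emph{strictly heavier} than that of $L\max\{X_i\}$ under the ordering of \Cref{tab:gga_operations} (the ratio of the two tail profiles diverges). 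Since distinguishing sub-Gaussians with differing scales is one of the paper's stated design goals, this is a strictly weaker theorem; for the Euclidean norm $c_d = \sqrt{d}$, so the loss grows with dimension (e.g., for $f(x) = d^{-1/2}\sum_i x_i$ applied to i.i.d.\ standard normals, the true class is $(0,\tfrac12,2)$ while your bound only certifies $(0,\tfrac{1}{2d},2)$). Recovering the dimension-free constant $L$ is precisely what the cited concentration inequality buys and what a union bound plus norm equivalence cannot. A separate caveat, which concerns the statement rather than your argument: even the concentration route controls only the exponential rate and not the polynomial factor $\nu$ --- take $f(x) = \|x\|_2$ on $d$ i.i.d.\ standard normals, which is chi-distributed with $\nu = d-1$ and hence strictly heavier than $\max_i X_i \equiv (0,\tfrac12,2)$ in the ordering --- so the ``$\equiv$'' in the theorem must be read loosely in $\nu$ as well.
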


\subsection{Power Law Approximation} 

There are many cases where power laws arise not from a single operation of random variables, but cumulatively, through many successive operations. In these cases, $\rho$ becomes small while $\sigma$ becomes large, such that $\sigma = \mathcal{O}(\rho^{-1})$. To see how this regime induces a power law, note that as $x \to \infty$, 
\[
p_{|X|}(x) \sim c x^\nu e^{-\sigma x^\rho} = \tilde{c} x^\nu e^{-\sigma(x^\rho - 1)} = \tilde{c} x^\nu e^{-\sigma\rho\frac{x^\rho - 1}{\rho}} \approx \tilde{c} x^\nu e^{-\sigma\rho \log x} = \tilde{c} x^{\nu-\sigma \rho},
\]
where we have used the approximation $\log x = \rho^{-2}(x^\rho - 1) + \mathcal{O}(\rho^2)$. Consequently, we can represent tails of this form by the Student $t$ distribution with $|\nu-\sigma\rho|-1$ degrees of freedom. In practice, we find this approximation tends to \emph{overestimate} the heaviness of the tail. 

Alternatively, the generalized Gamma density (\ref{eq:GenGammaDensity}) satisfies $\mathbb{E}X^r = \sigma^{-r/\rho} \Gamma(\frac{\nu+1+r}{\rho})/\Gamma(\frac{\nu+1}{\rho})$ for $r > 0$. Let $\alpha > 0$ be such that $\mathbb{E}X^\alpha = 2$. By Markov's inequality, the tail of $X$ satisfies $\mathbb{P}(X>x)\leq 2 x^{-\alpha}$. Therefore, we can represent tails of this form by the Student $t$ distribution with $\alpha+1$ degrees of freedom (generate $X \sim \text{StudentT}(\alpha)$). In practice, we find this approximation to be more accurate, and is hence used as our power law candidate distribution in Section~\ref{ssec:repr_dist}.

\subsection{Posterior Distributions} Suppose that a random variable $X$ is dependent on a parameter $\theta$ and a latent random element $Z$ through a function $f$ by $X = f(Z;\theta)$. Letting $\pi$ denote a prior on $\theta$, since $p(\theta\vert x) \propto p_X(x \vert \theta) \pi(\theta)$, it will suffice to find the tail of $p_X(x\vert \theta)$ in $\theta$, as we can incorporate the tail of $\pi$ with the \& operation. Assuming that $f$ is invertible with respect to both $Z$ and $\theta$ with respective inverses $f^{-1}(x;\theta)$ and $\Theta(x;z)$, a change of variables shows that $$p_X(x\vert \theta) = p_Z(f^{-1}(x;\theta))\left|\frac{\partial}{\partial x} f^{-1}(x;\theta)\right|.$$ Note that $z = f^{-1}(x;\Theta(x;z))$ and so $\Theta^{-1}(\theta;x) = f^{-1}(x;\theta)$, where $\Theta^{-1}(x;\theta)$ is the inverse of $z \mapsto \Theta(x;z)$ at $\theta$. Therefore, the density of $\Theta(x;Z)$ is $$p_{\Theta}(\theta,x) = p_Z(f^{-1}(x;\theta))\left|\frac{\partial}{\partial\theta} f^{-1}(x;\theta)\right|.$$ Consequently,
$$
p_X(x\vert \theta) = p_\Theta(\theta,x) R(x,\theta),
$$
where $R(x,\theta) = |\frac{\partial}{\partial x} f^{-1}(x;\theta)|/|\frac{\partial}{\partial\theta} f^{-1}(x;\theta)|$. Since the inverse of a composition of operations is a composition of inverses, the tail of $p_\Theta$ is relatively straightforward to determine by tracing back through the computation graph and sequentially applying inverse operations, i.e., $\oplus$ (addition) becomes $\ominus$ (subtraction), etc. For example, if $X = \mu + Z$, then $f(z,\mu) = \mu + z$, $f^{-1}(x;\mu) = x - \mu$, and $R(x,\mu) = 1$. Therefore, $\mu \vert X = x \equiv (x - z) \,\&\, \pi$. Similarly, if $X = \sigma Z$, then $f(z,\sigma) = \sigma z$, $f^{-1}(x,\sigma) = x/\sigma$, and $R(x,\sigma) = \sigma^{-1}/(x\sigma^{-2}) \equiv \sigma$. Therefore, $\mu \vert X = x \equiv (x/Z)\,\&\,(1,1,0)\,\&\,\pi$. If $X = Z/\sigma$, then $f(Z,\sigma) = z/\sigma$, $f^{-1}(x,\sigma) = \sigma x$, and $R(x,\sigma) = \sigma / x \equiv \sigma$. 

\section{List of Univariate Distributions}\label{sec:univariate_classes}

To demonstrate the scope of our algebra and facilitate implementation in a general PPL, Table \ref{tab:dist_list} lists many families of one-dimensional densities and their corresponding tail class. 

\bgroup
\def\arraystretch{2}
\begin{longtable}{|cccc|}
\caption{List of univariate distributions}\label{tab:dist_list}\\
\hline 
Name & Support & Density $p(x)$ & Class\tabularnewline
\hline 
\hline 
\small Benktander Type II & $(0,\infty)$ & $e^{\frac{a}{b}(1-x^{b})}x^{b-2}(ax^{b}-b+1)$ & $(2b-2,\frac{a}{b},b)$\tabularnewline
 
\small Beta prime  & $(0,\infty)$ & $\frac{\Gamma(\alpha+\beta)}{\Gamma(\alpha)\Gamma(\beta)}x^{\alpha-1}(1+x)^{-\alpha-\beta}$ & $\mathcal{R}_{\beta+1}$\tabularnewline
 
\small Burr  & $(0,\infty)$ & $ckx^{c-1} (1+x^{c})^{-k-1}$ & $\mathcal{R}_{ck+1}$\tabularnewline
 
\small Cauchy  & $(-\infty,\infty)$ & $(\pi\gamma)^{-1}\left[1+\left(\frac{x-x_{0}}{\gamma}\right)^{2}\right]^{-1}$ & $\mathcal{R}_{2}$\tabularnewline
 
\small Chi  & $(0,\infty)$ & $\frac{1}{2^{k/2-1}\Gamma(k/2)}x^{k-1}e^{-x^{2}/2}$ & $(k-1,\frac{1}{2},2)$\tabularnewline
 
\small Chi-squared  & $(0,\infty)$ & $\frac{1}{2^{k/2}\Gamma(k/2)}x^{\frac{k}{2}-1}e^{-x/2}$ & $(\frac{k}{2}-1,\frac{1}{2},1)$\tabularnewline
 
\small Dagum  & $(0,\infty)$ & $\frac{ap}{x}\left(\frac{x}{b}\right)^{ap}\left(\left(\frac{x}{b}\right)^{a}+1\right)^{-p-1}$ & $\mathcal{R}_{a+1}$\tabularnewline
 
\small Davis  & $(0,\infty)$ & $\propto(x-\mu)^{-1-n}/\left(e^{\frac{b}{x-\mu}}-1\right)$ & $(-1-n,b,-1)$\tabularnewline
 
\small Exponential  & $(0,\infty)$ & $\lambda e^{-\lambda x}$ & $(0,\lambda,1)$\tabularnewline
 
\small $F$  & $(0,\infty)$ & $\propto x^{d_{1}/2-1}(d_{1}x+d_{2})^{-(d_{1}+d_{2})/2}$ & $\mathcal{R}_{d_{2}/2+1}$\tabularnewline
 
\small Fisher $z$ & $(-\infty,\infty)$ & $\propto\frac{e^{d_{1}x}}{(d_{1}e^{2x}+d_{2})^{(d_{1}+d_{2})/2}}$ & $(0,d_{2},1)$\tabularnewline
 
\small Frechet  & $(0,\infty)$ & $\frac{\alpha}{\lambda}\left(\frac{x-m}{\lambda}\right)^{-1-\alpha}e^{-\left(\frac{x-m}{\lambda}\right)^{-\alpha}}$ & $(-1-\alpha,\lambda^{\alpha},-\alpha)$\tabularnewline
 
\small Gamma  & $(0,\infty)$ & $\frac{\beta^{\alpha}}{\Gamma(\alpha)}x^{\alpha-1}e^{-\beta x}$ & $(\alpha-1,\beta,1)$\tabularnewline
 
\small Gamma/Gompertz  & $(0,\infty)$ & $bse^{bx}\beta^{s}/(\beta-1+e^{bx})^{s+1}$ & $(0,bs,1)$\tabularnewline
 
\small Generalized hyperbolic  & $(-\infty,\infty)$ & $\propto e^{\beta(x-\mu)}\frac{K_{\lambda-1/2}(\alpha\sqrt{\delta^{2}+(x-\mu)^{2}})}{(\delta^{2}+(x-\mu)^{2})^{1/4-\lambda/2}}$ & $(\lambda-1,\alpha-\beta,1)$\tabularnewline
 
\small Generalized normal  & $(-\infty,\infty)$ & $\frac{\beta}{2\alpha\Gamma(1/\beta)}\exp\left(-\left(\frac{|x-\mu|}{\alpha}\right)^{\beta}\right)$ & $(0,\alpha^{-\beta},\beta)$\tabularnewline
 
\small Geometric stable  & $(-\infty,\infty)$ & no closed form & $\mathcal{R}_{\alpha+1}$\tabularnewline
 
\small Gompertz  & $(0,\infty)$ & $\sigma \eta \exp(\eta + \sigma x - \eta e^{\sigma x})$ & $\mathcal{L}$\tabularnewline

\small Gumbel  & $(0,\infty)$ & $\beta^{-1} e^{-(\beta^{-1}(x-\mu)+e^{-\beta^{-1}(x-\mu)})}$ & $(0,\frac{1}{\beta},1)$\tabularnewline

\small Gumbel Type II  & $(0,\infty)$ & $\alpha\beta x^{-\alpha-1}e^{-\beta x^{-\alpha}}$ & $(-\alpha-1,\beta,-\alpha)$\tabularnewline
 
\small Holtsmark  & $(-\infty,\infty)$ & no closed form & $\mathcal{R}_{5/2}$\tabularnewline
 
\small Hyperbolic secant  & $(-\infty,\infty)$ & $\frac{1}{2}\text{sech}\left(\frac{\pi x}{2}\right)$ & $(0,\frac{\pi}{2},1)$\tabularnewline
 
\small Inverse chi-squared  & $(0,\infty)$ & $\frac{2^{-k/2}}{\Gamma(k/2)}x^{-k/2-1}e^{-1/(2x)}$ & $(-\frac{k}{2}-1,\frac{1}{2},-1)$\tabularnewline
 
\small Inverse gamma  & $(0,\infty)$ & $\frac{\beta^{\alpha}}{\Gamma(\alpha)}x^{-\alpha-1}e^{-\beta/x}$ & $(-\alpha-1,\beta,-1)$\tabularnewline
 
\small Levy  & $(0,\infty)$ & $\sqrt{\frac{c}{2\pi}}(x-\mu)^{-3/2}e^{-\frac{c}{2(x-\mu)}}$ & $(-\frac{3}{2},\frac{c}{2},-1)$\tabularnewline
 
\small Laplace  & $(-\infty,\infty)$ & $\frac{1}{2\lambda}\exp\left(-\frac{|x-\mu|}{\lambda}\right)$ & $(0,\frac{1}{\lambda},1)$\tabularnewline
 
\small Logistic  & $(-\infty,\infty)$ & $\frac{e^{-(x-\mu)/\lambda}}{\lambda(1+e^{-(x-\mu)/\lambda})^{2}}$ & $(0,\frac{1}{\lambda},1)$\tabularnewline
 
\small Log-Cauchy  &
$(0,\infty)$ & $\frac{\sigma}{x\pi}((\log x - \mu)^2 + \sigma^2)^{-1}$ & $\mathcal{R}_1$\tabularnewline

\small Log-Laplace  & $(0,\infty)$ & $\frac{1}{2\lambda x}\exp\left(-\frac{\left|\log x-\mu\right|}{\lambda}\right)$ & $\mathcal{R}_{1/\lambda+1}$\tabularnewline
 
\small Log-logistic  & $(0,\infty)$ & $\frac{\beta}{\alpha}\left(\frac{x}{\alpha}\right)^{\beta-1}\left(1+\left(\frac{x}{\alpha}\right)^{\beta}\right)^{-2}$ & $\mathcal{R}_{\beta+1}$\tabularnewline

\small Log-$t$  &
$(0,\infty)$ &
$\propto x^{-1} (1 + \frac1{\nu}(\log x - \mu)^2)^{-\frac{\nu+1}{2}}$ & $\mathcal{R}_1$\tabularnewline
 
\small Lomax  & $(0,\infty)$ & $\frac{\alpha}{\lambda}\left(1+\frac{x}{\lambda}\right)^{-\alpha-1}$ & $\mathcal{R}_{\alpha+1}$\tabularnewline
 
\small Maxwell-Boltzmann  & $(0,\infty)$ & $\sqrt{\frac{2}{\pi}}\frac{x^{2}e^{-x^{2}/(2\sigma^{2})}}{\sigma^{3}}$ & $(2,\frac{1}{2\sigma^{2}},2)$\tabularnewline
 
\small normal  & $(-\infty,\infty)$ & $\frac{1}{\sigma\sqrt{2\pi}}e^{-\frac{1}{2}(\frac{x-\mu}{\sigma})^{2}}$ & $(0,\frac{1}{2\sigma^{2}},2$)\tabularnewline
 
\small Pareto  & $(x_{0},\infty)$ & $\alpha x_{0}^{\alpha}x^{-\alpha-1}$ & $\mathcal{R}_{\alpha+1}$\tabularnewline
 
\small Rayleigh  & $(0,\infty)$ & $\frac{x}{\sigma^{2}}e^{-x^{2}/(2\sigma^{2})}$ & $(1,\frac{1}{2\sigma^{2}},2)$\tabularnewline
 
\small Rice  & $(0,\infty)$ & $\frac{x}{\sigma^{2}}\exp\left(-\frac{(x^{2}+\nu^{2})}{2\sigma^{2}}\right)I_{0}\left(\frac{x\nu}{\sigma^{2}}\right)$ & $(\frac{1}{2},\frac{1}{2\sigma^{2}},2)$\tabularnewline
 
\small Skew normal  & $(-\infty,\infty)$ & no closed form & $(0,\frac{1}{2\sigma^{2}},2)$\tabularnewline
 
\small Slash  & $(-\infty,\infty)$ & $\frac{1-e^{-\frac{1}{2}x^{2}}}{\sqrt{2\pi}x^{2}}$ & $(-2,\frac{1}{2},2)$\tabularnewline
 
\small Stable  & $(-\infty,\infty)$ & no closed form & $\mathcal{R}_{\alpha+1}$\tabularnewline
 
\small Student's $t$- & $(-\infty,\infty)$ & $\frac{\Gamma(\frac{\nu+1}{2})}{\sqrt{\nu\pi}\Gamma(\frac{\nu}{2})}\left(1+\frac{x^{2}}{\nu}\right)^{-\frac{\nu+1}{2}}$ & $\mathcal{R}_{\nu+1}$\tabularnewline
 
\small Tracy-Widom  & $(-\infty,\infty)$ & no closed form & $(-\frac{3\beta}{4}-1,\frac{2\beta}{3},\frac{3}{2})$\tabularnewline
 
\small Voigt  & $(-\infty,\infty)$ & no closed form & $\mathcal{R}_{2}$\tabularnewline
 
\small Weibull  & $(0,\infty)$ & $\frac{\rho}{\lambda}\left(\frac{x}{\lambda}\right)^{\rho-1}e^{-(x/\lambda)^{\rho}}$ & $(\rho-1,\lambda^{-\rho},\rho)$\tabularnewline
\hline 
\end{longtable}
\egroup

The following densities are not supported by our algebra: Benini distribution; Benktander Type I distribution; Johnson's $S_U$-distribution; and the log-normal distribution. All of these densities exhibit log-normal tails.

\section{Additional Details for Experiments}\label{sec:experiment_details}

The targets in \Cref{tab:de} and \Cref{tab:vi} are analyzed using
the GGA in \Cref{sec:addtl_eg}. 
Note that Inverse Gamma (``IG'') corresponds
to the inverse exponential. We selected closed form targets so that
the Pareto tail index $\alpha$ is known analytically and the quality
of theoretical predictions as well as empirical results can be rigorously evaluated. 
All experiments are repeated on i7-8700K CPU and GTX 1080 GPU hardware for $100$ trials. $1000$ samples
from the model (as well as the approximation in VI) were used to compute
each gradient estimate. Losses were trained until convergence, which 
all occurred in under $10^4$ iterations at a $0.05$ learning rate and
the Adam \citep{kingma2014adam} optimizer.

\section{Mellin Transforms}
\label{sec:Mellin}

Recall that the Mellin transform of a function $f$ on $(0,\infty)$ is given by
\[
\mathcal{M}_s[f] = \int_0^\infty x^{s-1} f(x) \dd x.
\]
Letting $p_{XY}$ denote the density of the product of independent random variables $X,Y$ with respective densities $p_X$ and $p_Y$, $\mathcal{M}_s[p_{XY}] = \mathcal{M}_s[p_X] \mathcal{M}_s[p_Y]$. There is
\[
\mathcal{M}_s[c x^\nu e^{-\sigma x^\rho}] = \frac{c \sigma^{-\nu/\rho}}{\rho} \sigma^{-s/\rho} \Gamma\left(\frac{\nu}{\rho} + \frac{s}{\rho}\right).
\]
To facilitate the proof of Proposition \ref{prop:Mult}, we define the Fox $H$-function
\[
H_{p,q}^{m,n}\left[z\left|\substack{(a_{1},A_{1}),\dots,(a_{p},A_{p})\\
(b_{1},B_{1}),\dots,(b_{q},B_{q})
}
\right.\right]
\]
as the inverse Mellin transform of
\[
\Theta(s) = z^{-s} \frac{\prod_{j=1}^m \Gamma(b_j + B_j s) \cdots \prod_{j=1}^n \Gamma(1-a_j - A_j s)}{\prod_{j=m+1}^q \Gamma(1-b_j - B_j s) \prod_{j=n+1}^p \Gamma(a_j + A_j s)}.
\]
An important property of the Fox $H$-function is its asymptotic behaviour as $z \to \infty$. From \cite[Theorem 1.3]{mathai2009h}, we have
\[
H_{p,q}^{q,0}\left[z\left|\substack{(a_{1},A_{1}),\dots,(a_{p},A_{p})\\
(b_{1},B_{1}),\dots,(b_{q},B_{q})}\right.\right] \sim c x^{(\delta+\frac12)/\mu} \exp(-\mu \beta^{-1/\mu} x^{1/\mu}),\qquad \mbox{as }x\to\infty,
\]
for some constant $c > 0$, where $\beta = \prod_{j=1}^p (A_j)^{-A_j} \prod_{j=1}^q B_j^{B_j}$, $\mu = \sum_{j=1}^q B_j - \sum_{j=1}^p A_j$, and $\delta = \sum_{j=1}^q b_j - \sum_{j=1}^p a_j + \frac{p-q}{2}$.

\fi

\section{Proof of Lemma \ref{lem:Products}}
\label{sxn:proof_of_a_lemma}

The proof relies on the following integral definition \cite[pg. 183]{watson1995treatise} and asymptotic relation as $z \to \infty$ \cite[pg. 202]{watson1995treatise} of the modified Bessel function $K_\nu(z)$ for $z > 0$ and $\nu \geq 0$,
\begin{equation}
\label{eq:BesselInt}
K_\nu(z) = \frac12 \left(\frac{z}{2}\right)^\nu \int_0^\infty u^{-\nu-1}\exp\left(-u - \frac{z^2}{4u}\right) \dd u \sim \sqrt{\frac{\pi}{2z}}e^{-z}. 
\end{equation}
We also make use of the known density for the product of two independent continuous random variables: if $X$ and $Y$ have densities $p_X$ and $p_Y$ respectively, then $Z = XY$ has density
\[
p_Z(z) = \int_\mathbb{R} p_X(x) p_Y(z/x) |x|^{-1} \dd x.
\]
\begin{itemize}
\item \textbf{Density of $X_1 X_2$:} Recalling that the density of $X \sim \mathrm{Exp}(\lambda)$ is $p_X(x) = \lambda e^{-\lambda x}$ for $x \geq 0$, for $Z = XY$ where $X \sim \mathrm{Exp}(\lambda_1)$ and $Y \sim \mathrm{Exp}(\lambda_2)$ are independent,
\[
p_Z(z) = \int_{0}^{\infty}x^{-1}\lambda_{1}e^{-\lambda_{1}x}\lambda_{2}e^{-\lambda_{2}z/x}\dd x = \lambda_{1}\lambda_{2}\int_{0}^{\infty}x^{-1}e^{-\lambda_{1}x-\lambda_{2}z/x}\dd x.
\]
Since $2K_{0}(2\sqrt{z})=\int_{0}^{\infty}u^{-1}\exp(-u-\frac{z}{u})\dd u$, let $u = \lambda_1 v$, so that $\dd u = \lambda_1 \dd v$,
\[
2K_{0}(2\sqrt{\lambda_{1}\lambda_{2}z})=\int_{0}^{\infty}u^{-1}\exp\left(-\lambda_{1}v-\lambda_{2}\frac{z}{v}\right)\dd v.
\]
Therefore, letting $\lambda = \sqrt{\lambda_1\lambda_2}$, 
\[
p_Z(z) = 2\lambda^2 K_0(2\lambda\sqrt{z}) \sim \sqrt{\pi} \lambda^{3/2} z^{-1/4} e^{-2\lambda z^{1/2}}.
\]
\item \textbf{Density of $Z_1 Z_2$:} Recalling that the density of $X \sim \mathcal{N}(0,1)$ is $p_X(x) = (2\pi)^{-1/2} \exp(-\frac12 x^2)$, for $Z = XY$ where $X,Y \sim \mathcal{N}(0,1)$ are independent, 
\begin{align*}
p_Z(z) &= \frac{1}{2\pi}\int_{\mathbb{R}}\left|x\right|^{-1}e^{-\frac{1}{2}x^{2}}e^{-\frac{1}{2}z^{2}/x^{2}}\dd x\\
&=\frac{1}{\pi}\int_{0}^{\infty}x^{-1}e^{-\frac{1}{2}x^{2}-\frac{1}{2}z^{2}/x^{2}}\dd x\\
&=\frac{1}{\pi}\int_{0}^{\infty}x^{-1}e^{-\frac{1}{2}x^{2}-\frac{1}{2}z^{2}/x^{2}}\dd x.
\end{align*}
Let $u=\frac{1}{2}x^{2}$ so that $\dd u=x \dd x$ and
\[
K_{\nu}(z)=z^{\nu}\int_{0}^{\infty}x^{-2\nu-1}\exp\left(-\frac{1}{2}x^{2}-\frac{z^{2}}{2x^{2}}\right)\dd x.
\]
In particular, for any $z \in \mathbb{R}$,
\begin{equation}
\label{eq:BesselGauss}
K_{0}(|z|)=\int_{0}^{\infty}x^{-1}\exp\left(-\frac{1}{2}x^{2}-\frac{z^{2}}{2x^{2}}\right)\dd x,
\end{equation}
and so
\[
p_Z(z) = \frac{1}{\pi} K_0(|z|) \sim \frac{1}{\sqrt{2\pi|z|}} e^{-|z|}.
\]
\item \textbf{Density of $Z$:} Finally, by a change of variables, we note that the density of $X^{-1}$ where $X \sim \mathcal{N}(0,1)$ is $p_{X^{-1}}(x) = (2\pi)^{-1/2} x^{-2}\exp(-\frac{1}{2x^2})$. Therefore, the density of $Z = 1/(XY)$ where $X,Y \sim \mathcal{N}(0,1)$ are independent is given by
\begin{align*}
p_{Z}(z)&=\int_{\mathbb{R}}\frac{1}{\sqrt{2\pi}x^{2}}e^{-\frac{1}{2x^{2}}}\frac{x^{2}}{\sqrt{2\pi}z^{2}}e^{-\frac{x^{2}}{2z^{2}}}\frac{1}{\left|x\right|}\dd x\\
&=\frac{1}{2\pi z^{2}}\int_{\mathbb{R}}e^{-\frac{1}{2x^{2}}-\frac{x^{2}}{2z^{2}}}\frac{1}{\left|x\right|}\dd x\\
&=\frac{1}{\pi z^{2}}\int_{0}^{\infty}e^{-\frac{1}{2x^{2}}-\frac{x^{2}}{2z^{2}}}\frac{1}{x}\dd x\\
&=\frac{1}{\pi z^{2}}K_{0}(|z|^{-1}) \sim \sqrt{\frac{1}{2\pi}}|z|^{-3/2}e^{-|z|^{-1}},
\end{align*}
where we have once again used (\ref{eq:BesselGauss}). 
\end{itemize}

\begin{thebibliography}{10}

\bibitem{asmussen2010ruin}
S{\o}ren Asmussen and Hansjorg Albrecher.
\newblock {\em Ruin probabilities}, volume~14.
\newblock World scientific, 2010.

\bibitem{asmussen2017tail}
S{\o}ren Asmussen, Enkelejd Hashorva, Patrick~J Laub, and Thomas Taimre.
\newblock Tail asymptotics of light-tailed {W}eibull-like sums.
\newblock {\em Probability and Mathematical Statistics}, 37(2):235--256, 2017.

\bibitem{bernstein2019static}
Ryan Bernstein.
\newblock Static analysis for probabilistic programs.
\newblock {\em arXiv preprint arXiv:1909.05076}, 2019.

\bibitem{bingham2019pyro}
Eli Bingham, Jonathan~P Chen, Martin Jankowiak, Fritz Obermeyer, Neeraj
  Pradhan, Theofanis Karaletsos, Rohit Singh, Paul Szerlip, Paul Horsfall, and
  Noah~D Goodman.
\newblock Pyro: Deep universal probabilistic programming.
\newblock {\em The Journal of Machine Learning Research}, 20(1):973--978, 2019.

\bibitem{buraczewski2016stochastic}
Dariusz Buraczewski, Ewa Damek, Thomas Mikosch, et~al.
\newblock Stochastic models with power-law tails.
\newblock {\em Springer Ser. Oper. Res. Financ. Eng., Springer, Cham},
  10:978--3, 2016.

\bibitem{burda2015importance}
Yuri Burda, Roger Grosse, and Ruslan Salakhutdinov.
\newblock Importance weighted autoencoders.
\newblock {\em arXiv preprint arXiv:1509.00519}, 2015.

\bibitem{buza2013feedback}
Krisztian Buza.
\newblock Feedback prediction for blogs.
\newblock In {\em Data analysis, machine learning and knowledge discovery},
  pages 145--152. Springer, 2013.

\bibitem{carpenter2017stan}
Bob Carpenter, Andrew Gelman, Matthew~D Hoffman, Daniel Lee, Ben Goodrich,
  Michael Betancourt, Marcus Brubaker, Jiqiang Guo, Peter Li, and Allen
  Riddell.
\newblock Stan: A probabilistic programming language.
\newblock {\em Journal of statistical software}, 76(1), 2017.

\bibitem{cho2019disintegration}
Kenta Cho and Bart Jacobs.
\newblock Disintegration and {B}ayesian inversion via string diagrams.
\newblock {\em Mathematical Structures in Computer Science}, 29(7):938--971,
  2019.

\bibitem{claret2013bayesian}
Guillaume Claret, Sriram~K Rajamani, Aditya~V Nori, Andrew~D Gordon, and
  Johannes Borgstr{\"o}m.
\newblock Bayesian inference using data flow analysis.
\newblock In {\em Proceedings of the 2013 9th Joint Meeting on Foundations of
  Software Engineering}, pages 92--102, 2013.

\bibitem{clauset2009power}
Aaron Clauset, Cosma~Rohilla Shalizi, and Mark~EJ Newman.
\newblock Power-law distributions in empirical data.
\newblock {\em SIAM review}, 51(4):661--703, 2009.

\bibitem{cusumano2019gen}
Marco~F Cusumano-Towner, Feras~A Saad, Alexander~K Lew, and Vikash~K
  Mansinghka.
\newblock Gen: a general-purpose probabilistic programming system with
  programmable inference.
\newblock In {\em Proceedings of the 40th acm sigplan conference on programming
  language design and implementation}, pages 221--236, 2019.

\bibitem{de2017programming}
Perry de~Valpine, Daniel Turek, Christopher~J Paciorek, Clifford
  Anderson-Bergman, Duncan~Temple Lang, and Rastislav Bodik.
\newblock Programming with models: writing statistical algorithms for general
  model structures with nimble.
\newblock {\em Journal of Computational and Graphical Statistics},
  26(2):403--413, 2017.

\bibitem{de2008field}
Saverio De~Vito, Ettore Massera, Marco Piga, Luca Martinotto, and Girolamo
  Di~Francia.
\newblock On field calibration of an electronic nose for benzene estimation in
  an urban pollution monitoring scenario.
\newblock {\em Sensors and Actuators B: Chemical}, 129(2):750--757, 2008.

\bibitem{durkan2019neural}
Conor Durkan, Artur Bekasov, Iain Murray, and George Papamakarios.
\newblock Neural spline flows.
\newblock {\em Advances in neural information processing systems}, 32, 2019.

\bibitem{eslami2016attend}
SM~Eslami, Nicolas Heess, Theophane Weber, Yuval Tassa, David Szepesvari,
  Geoffrey~E Hinton, et~al.
\newblock Attend, infer, repeat: Fast scene understanding with generative
  models.
\newblock {\em Advances in Neural Information Processing Systems}, 29, 2016.

\bibitem{girolami2011riemann}
Mark Girolami and Ben Calderhead.
\newblock Riemann manifold {L}angevin and {H}amiltonian monte carlo methods.
\newblock {\em Journal of the Royal Statistical Society: Series B (Statistical
  Methodology)}, 73(2):123--214, 2011.

\bibitem{goldie1998subexponential}
Charles~M Goldie and Claudia Kl{\"u}ppelberg.
\newblock Subexponential distributions.
\newblock {\em A practical guide to heavy tails: statistical techniques and
  applications}, pages 435--459, 1998.

\bibitem{goodman2012church}
Noah Goodman, Vikash Mansinghka, Daniel~M Roy, Keith Bonawitz, and Joshua~B
  Tenenbaum.
\newblock Church: a language for generative models.
\newblock {\em arXiv preprint arXiv:1206.3255}, 2012.

\bibitem{gulisashvili2016tail}
Archil Gulisashvili and Peter Tankov.
\newblock Tail behavior of sums and differences of log-normal random variables.
\newblock {\em Bernoulli}, 22(1):444--493, 2016.

\bibitem{gupta2008analyzing}
Rameshwar~D Gupta and Ramesh~C Gupta.
\newblock Analyzing skewed data by power normal model.
\newblock {\em Test}, 17(1):197--210, 2008.

\bibitem{gurbuzbalaban2021heavy}
Mert Gurbuzbalaban, Umut Simsekli, and Lingjiong Zhu.
\newblock The heavy-tail phenomenon in {SGD}.
\newblock In {\em International Conference on Machine Learning}, pages
  3964--3975. PMLR, 2021.

\bibitem{hamidieh2018data}
Kam Hamidieh.
\newblock A data-driven statistical model for predicting the critical
  temperature of a superconductor.
\newblock {\em Computational Materials Science}, 154:346--354, 2018.

\bibitem{hodgkinson2021multiplicative}
Liam Hodgkinson and Michael~W. Mahoney.
\newblock Multiplicative noise and heavy tails in stochastic optimization.
\newblock In {\em International Conference on Machine Learning}, pages
  4262--4274. PMLR, 2021.

\bibitem{jaini2020tails}
Priyank Jaini, Ivan Kobyzev, Yaoliang Yu, and Marcus Brubaker.
\newblock Tails of {L}ipschitz triangular flows.
\newblock In {\em International Conference on Machine Learning}, pages
  4673--4681. PMLR, 2020.

\bibitem{jones1989probabilistic}
Claire Jones and Gordon~D Plotkin.
\newblock A probabilistic powerdomain of evaluations.
\newblock In {\em Proceedings. Fourth Annual Symposium on Logic in Computer
  Science}, pages 186--187. IEEE Computer Society, 1989.

\bibitem{kingma2014adam}
Diederik~P Kingma and Jimmy Ba.
\newblock Adam: A method for stochastic optimization.
\newblock {\em arXiv preprint arXiv:1412.6980}, 2014.

\bibitem{koller2009probabilistic}
Daphne Koller and Nir Friedman.
\newblock {\em Probabilistic graphical models: principles and techniques}.
\newblock MIT press, 2009.

\bibitem{kozen1979semantics}
Dexter Kozen.
\newblock Semantics of probabilistic programs.
\newblock In {\em 20th Annual Symposium on Foundations of Computer Science
  (sfcs 1979)}, pages 101--114. IEEE, 1979.

\bibitem{kucukelbir2017automatic}
Alp Kucukelbir, Dustin Tran, Rajesh Ranganath, Andrew Gelman, and David~M Blei.
\newblock Automatic differentiation variational inference.
\newblock {\em Journal of machine learning research}, 2017.

\bibitem{ledoux2001concentration}
Michel Ledoux.
\newblock {\em The concentration of measure phenomenon}.
\newblock American Mathematical Soc., 2001.

\bibitem{lee2019towards}
Wonyeol Lee, Hangyeol Yu, Xavier Rival, and Hongseok Yang.
\newblock Towards verified stochastic variational inference for probabilistic
  programs.
\newblock {\em Proceedings of the ACM on Programming Languages}, 4(POPL):1--33,
  2019.

\bibitem{ftvi}
Feynman Liang, Liam Hodgkinson, and Michael~W. Mahoney.
\newblock Fat-tailed variational inference with anisotropic tail adaptive
  flows, 2022.

\bibitem{MM19_HTSR_ICML}
C.~H. Martin and M.~W. Mahoney.
\newblock Traditional and heavy-tailed self regularization in neural network
  models.
\newblock In {\em Proceedings of the 36th International Conference on Machine
  Learning}, pages 4284--4293, 2019.

\bibitem{MM20_SDM}
C.~H. Martin and M.~W. Mahoney.
\newblock Heavy-tailed {U}niversality predicts trends in test accuracies for
  very large pre-trained deep neural networks.
\newblock In {\em Proceedings of the 20th SIAM International Conference on Data
  Mining}, 2020.

\bibitem{MM20a_trends_NatComm}
C.~H. Martin, T.~S. Peng, and M.~W. Mahoney.
\newblock Predicting trends in the quality of state-of-the-art neural networks
  without access to training or testing data.
\newblock {\em Nature Communications}, 12(4122):1--13, 2021.

\bibitem{mathai2009h}
Arakaparampil~M Mathai, Ram~Kishore Saxena, and Hans~J Haubold.
\newblock {\em The $H$-function: theory and applications}.
\newblock Springer Science \& Business Media, 2009.

\bibitem{mikosch}
T~Mikosch.
\newblock Regular variation subexponentiality and their applications in
  probability theory, 1999.

\bibitem{milch2010extending}
Brian Milch and Stuart Russell.
\newblock Extending {B}ayesian networks to the open-universe case.
\newblock {\em Heuristics, Probability and Causality: A Tribute to Judea Pearl.
  College Publications}, 2010.

\bibitem{NairWiermanZwart}
J.~Nair, A.~Wierman, and B.~Zwart.
\newblock {\em The Fundamentals of Heavy Tails: Properties, Emergence, and
  Estimation}.
\newblock Cambridge University Press, 2022.

\bibitem{nori2014r2}
Aditya Nori, Chung-Kil Hur, Sriram Rajamani, and Selva Samuel.
\newblock R2: An efficient {MCMC} sampler for probabilistic programs.
\newblock In {\em Proceedings of the AAAI Conference on Artificial
  Intelligence}, volume~28, 2014.

\bibitem{papamakarios2021normalizing}
George Papamakarios, Eric Nalisnick, Danilo~Jimenez Rezende, Shakir Mohamed,
  and Balaji Lakshminarayanan.
\newblock Normalizing flows for probabilistic modeling and inference.
\newblock {\em Journal of Machine Learning Research}, 22(57):1--64, 2021.

\bibitem{paszke2019pytorch}
Adam Paszke, Sam Gross, Francisco Massa, Adam Lerer, James Bradbury, Gregory
  Chanan, Trevor Killeen, Zeming Lin, Natalia Gimelshein, Luca Antiga, et~al.
\newblock Pytorch: An imperative style, high-performance deep learning library.
\newblock {\em Advances in neural information processing systems}, 32, 2019.

\bibitem{who-dataset}
Kumar Rajarshi.
\newblock Life expectancy (who), 2018.

\bibitem{roberts1996exponential}
Gareth~O Roberts and Richard~L Tweedie.
\newblock Exponential convergence of {L}angevin distributions and their
  discrete approximations.
\newblock {\em Bernoulli}, pages 341--363, 1996.

\bibitem{rosen1988global}
Barry~K Rosen, Mark~N Wegman, and F~Kenneth Zadeck.
\newblock Global value numbers and redundant computations.
\newblock In {\em Proceedings of the 15th ACM SIGPLAN-SIGACT symposium on
  Principles of programming languages}, pages 12--27, 1988.

\bibitem{sankaranarayanan2013static}
Sriram Sankaranarayanan, Aleksandar Chakarov, and Sumit Gulwani.
\newblock Static analysis for probabilistic programs: inferring whole program
  properties from finitely many paths.
\newblock In {\em Proceedings of the 34th ACM SIGPLAN conference on Programming
  language design and implementation}, pages 447--458, 2013.

\bibitem{shan2017exact}
Chung-chieh Shan and Norman Ramsey.
\newblock Exact {B}ayesian inference by symbolic disintegration.
\newblock In {\em Proceedings of the 44th ACM SIGPLAN Symposium on Principles
  of Programming Languages}, pages 130--144, 2017.

\bibitem{siddharth2017learning}
N.~Siddharth, Brooks Paige, Jan-Willem van~de Meent, Alban Desmaison, Noah~D.
  Goodman, Pushmeet Kohli, Frank Wood, and Philip Torr.
\newblock Learning disentangled representations with semi-supervised deep
  generative models.
\newblock In I.~Guyon, U.~V. Luxburg, S.~Bengio, H.~Wallach, R.~Fergus,
  S.~Vishwanathan, and R.~Garnett, editors, {\em Advances in Neural Information
  Processing Systems 30}, pages 5927--5937. Curran Associates, Inc., 2017.

\bibitem{simsekli2019tail}
Umut Simsekli, Levent Sagun, and Mert Gurbuzbalaban.
\newblock A tail-index analysis of stochastic gradient noise in deep neural
  networks.
\newblock In {\em International Conference on Machine Learning}, pages
  5827--5837. PMLR, 2019.

\bibitem{spiegelhalter1996bugs}
David Spiegelhalter, Andrew Thomas, Nicky Best, and Wally Gilks.
\newblock {BUGS 0.5: {B}ayesian inference using {G}ibbs sampling manual
  (version ii)}.
\newblock {\em MRC Biostatistics Unit, Institute of Public Health, Cambridge,
  UK}, pages 1--59, 1996.

\bibitem{tajvidi2003confidence}
Nader Tajvidi.
\newblock Confidence intervals and accuracy estimation for heavy-tailed
  generalized {P}areto distributions.
\newblock {\em Extremes}, 6(2):111--123, 2003.

\bibitem{tehrani2020bean}
Nazanin Tehrani, Nimar~S Arora, Yucen~Lily Li, Kinjal~Divesh Shah, David
  Noursi, Michael Tingley, Narjes Torabi, Eric Lippert, Erik Meijer, et~al.
\newblock Bean machine: A declarative probabilistic programming language for
  efficient programmable inference.
\newblock In {\em International Conference on Probabilistic Graphical Models}.
  PMLR, 2020.

\bibitem{tolpin2016design}
David Tolpin, Jan-Willem van~de Meent, Hongseok Yang, and Frank Wood.
\newblock Design and implementation of probabilistic programming language
  anglican.
\newblock In {\em Proceedings of the 28th Symposium on the Implementation and
  Application of Functional programming Languages}, pages 1--12, 2016.

\bibitem{tran2018simple}
Dustin Tran, Matthew~W Hoffman, Dave Moore, Christopher Suter, Srinivas
  Vasudevan, and Alexey Radul.
\newblock Simple, distributed, and accelerated probabilistic programming.
\newblock {\em Advances in Neural Information Processing Systems}, 31, 2018.

\bibitem{vehtari2015pareto}
Aki Vehtari, Daniel Simpson, Andrew Gelman, Yuling Yao, and Jonah Gabry.
\newblock Pareto smoothed importance sampling.
\newblock {\em arXiv preprint arXiv:1507.02646}, 2015.

\bibitem{wang2018pmaf}
Di~Wang, Jan Hoffmann, and Thomas Reps.
\newblock Pmaf: an algebraic framework for static analysis of probabilistic
  programs.
\newblock {\em ACM SIGPLAN Notices}, 53(4):513--528, 2018.

\bibitem{wang2018variational}
Dilin Wang, Hao Liu, and Qiang Liu.
\newblock Variational inference with tail-adaptive f-divergence.
\newblock {\em Advances in Neural Information Processing Systems}, 31, 2018.

\bibitem{watson1995treatise}
George~Neville Watson.
\newblock {\em A treatise on the theory of {B}essel functions}.
\newblock Cambridge university press, 1995.

\bibitem{webb2019aml}
Stefan Webb, Jonathan~P. Chen, Matrin Jankowiak, and Noah Goodman.
\newblock Improving automated variational inference with normalizing flows.
\newblock In {\em ICML Workshop on Automated Machine Learning}, 2019.

\bibitem{wingate2011lightweight}
David Wingate, Andreas Stuhlm{\"u}ller, and Noah Goodman.
\newblock Lightweight implementations of probabilistic programming languages
  via transformational compilation.
\newblock In {\em Proceedings of the Fourteenth International Conference on
  Artificial Intelligence and Statistics}, pages 770--778. JMLR Workshop and
  Conference Proceedings, 2011.

\bibitem{xu2020advancedhmc}
Kai Xu, Hong Ge, Will Tebbutt, Mohamed Tarek, Martin Trapp, and Zoubin
  Ghahramani.
\newblock Advancedhmc. jl: A robust, modular and efficient implementation of
  advanced hmc algorithms.
\newblock In {\em Symposium on Advances in Approximate Bayesian Inference},
  pages 1--10. PMLR, 2020.

\bibitem{yao2018yes}
Yuling Yao, Aki Vehtari, Daniel Simpson, and Andrew Gelman.
\newblock Yes, but did it work?: Evaluating variational inference.
\newblock In {\em International Conference on Machine Learning}, pages
  5581--5590. PMLR, 2018.

\end{thebibliography}
\end{document}